\documentclass{article}

% if you need to pass options to natbib, use, e.g.:
%     \PassOptionsToPackage{numbers, compress}{natbib}
% before loading neurips_2020

% ready for submission
%\usepackage{neurips_2020}

% to compile a preprint version, e.g., for submission to arXiv, add add the
% [preprint] option:
%     \usepackage[preprint]{neurips_2020}

% to compile a camera-ready version, add the [final] option, e.g.:
%     \usepackage[final]{neurips_2020}

% to avoid loading the natbib package, add option nonatbib:
\usepackage[nonatbib,preprint]{neurips_2020}

\usepackage[utf8]{inputenc} % allow utf-8 input
\usepackage[T1]{fontenc}    % use 8-bit T1 fonts
\usepackage{url}            % simple URL typesetting
\usepackage{booktabs}       % professional-quality tables
\usepackage{amsfonts}       % blackboard math symbols
\usepackage{nicefrac}       % compact symbols for 1/2, etc.
\usepackage{microtype}      % microtypography

\usepackage{graphicx}

\usepackage{comment}
\usepackage{amsmath,amssymb, amsthm} % define this before the line numbering.

%%% AMS
\newtheorem*{remark}{Remark}
\newtheorem{proposition}{Proposition}[section]

\newcommand*\samethanks[1][\value{footnote}]{\footnotemark[#1]}
\usepackage{xcolor}
\usepackage{pgfplots}

\usepackage{booktabs}
\usepackage{multirow}
\pgfplotsset{compat=1.14}
\usepackage{array}

\newcommand{\mm}[1]{{\color{red}MICHAEL---#1---}}

%%% Hartmut %%%

\newcommand*{\R}{\mathbb{R}}
\newcommand*{\cl}{\mathcal{L}}
\newcommand*{\lbr}{\left(}

\newcommand*{\rbr}{\right)}
\newcommand*\diff{\mathop{}\!\mathrm{d}}
\usepackage{dirtytalk}
\usepackage{mathtools}

%\usepackage{titlesec}
%\titlespacing*{\subsubsection}{0pt}{\baselineskip}{\baselineskip}

% Plots
%\usepackage{pgfplots}
\usepackage{svg}
\usepackage{wrapfig}
\usepackage{array}
\newcolumntype{P}[1]{>{\centering\arraybackslash}p{#1}}
%%%%%%%%%%%%%%%

\title{Inverting Gradients - How easy is it to break privacy in federated learning?}

%\titlerunning{Inverting Gradients}
% \author{Jonas Geiping\thanks{Authors contributed equally.}
% \And
% Hartmut Bauermeister\samethanks  
% \And
% Hannah Dröge\samethanks 
% \And
% Michael Moeller}

  %
  \author{
 Jonas Geiping\thanks{Authors contributed equally.} 
 \And Hartmut Bauermeister \samethanks \And Hannah Dröge \samethanks \And
 Michael Moeller \\
\phantom{user}\\
 Dep. of Electrical Engineering and Computer Science\\
  University of Siegen\\
\texttt{\{jonas.geiping, hartmut.bauermeister, hannah.droege,}\\
\texttt{michael.moeller \}@uni-siegen.de}
}
\begin{document}
\maketitle
\vspace{-0.5cm}
\begin{abstract}
The idea of federated learning is to collaboratively train a neural network on a server. Each user receives the current weights of the network and in turns sends parameter updates (gradients) based on local data. This protocol has been designed not only to train neural networks data-efficiently, but also to provide privacy benefits for users, as their input data remains on device and only parameter gradients are shared. 
But how secure is sharing parameter gradients? Previous attacks have provided a false sense of security, by succeeding only in contrived settings - even for a single image. However, by exploiting a magnitude-invariant loss along with optimization strategies based on adversarial attacks, we show that is is actually possible to faithfully reconstruct images at high resolution from the knowledge of their parameter gradients, and demonstrate that such a break of privacy is possible even for trained deep networks.
We analyze the effects of architecture as well as parameters on the difficulty of reconstructing an input image and prove that any input to a fully connected layer can be reconstructed analytically independent of the remaining architecture. Finally we discuss settings encountered in practice and show that even averaging gradients over several iterations or several images does not protect the user's privacy in federated learning applications.% in computer vision. 
%\keywords{Inverse Problems, Computer Vision, Deep Learning, Privacy, Federated Learning, Adversarial Attacks, Differential Privacy}
\end{abstract}

\section{Introduction}
Federated or collaborative learning \cite{chilimbi_project_2014,shokri_privacy-preserving_2015} is a distributed learning paradigm that has recently gained significant attention as both data requirements and privacy concerns in machine learning continue to rise \cite{mcmahan_communication-efficient_2017,jochems_distributed_2016,yang_federated_2019}. The basic idea is to train a machine learning model, for example a neural network, by optimizing the parameters $\theta$ of the network using a loss function $\mathcal{L}$ and exemplary training data consisting of input images $x_i$ and corresponding labels $y_i$ in order to solve
\begin{equation}
\label{eq:training}
     \min_\theta \sum_{i=1}^N \mathcal{L}_\theta(x_i, y_i).    
\end{equation}
\begin{figure}[thb]
    \centering
    \includegraphics[width=0.32\textwidth]{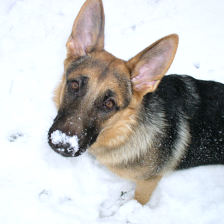}
    \includegraphics[width=0.32\textwidth]{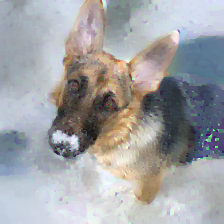}
    \includegraphics[width=0.32\textwidth]{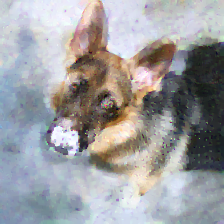}
    \caption{Reconstruction of an input image $x$ from the gradient $\nabla_{\theta} \mathcal{L}_{\theta}(x, y)$. Left: Image from the validation dataset. Middle: Reconstruction from a trained ResNet-18 trained on ImageNet. Right: Reconstruction from a trained ResNet-152. In both cases, the intended privacy of the image is broken. Also note that previous attacks cannot recover ImageNet-sized data \cite{zhu_deep_2019}.}
    \label{fig:teaser}
\end{figure}
We consider a distributed setting in which a \textit{server} wants to solve \eqref{eq:training} with the help of multiple \textit{users} that own training data $(x_i,y_i)$. The idea of federated learning is to only share the gradients $\nabla_{\theta} \mathcal{L}_{\theta}(x_i, y_i)$ instead of the original data $(x_i,y_i)$ with the server which it subsequently accumulates to update the overall weights. Using gradient descent the server's updates could, for instance, constitute 
\begin{equation}
\label{eq:fedSGD}
    \theta^{k+1} = \underbrace{\theta^k - \tau \sum_{i=1}^N }_{\text{server}} \underbrace{\vphantom{\sum_{i=1}^N}\nabla_{\theta} \mathcal{L}_{\theta^k}(x_i, y_i)}_{\text{users}}. 
\end{equation}
The updated parameters $\theta^{k+1}$ are sent back to the individual users. The procedure in eq.~\eqref{eq:fedSGD} is called \textit{federated SGD}. In contrast, in \textit{federated averaging} \cite{konecny_federated_2015,mcmahan_communication-efficient_2017} each user computes several gradient descent steps locally, and sends the updated parameters back to the server. Finally, information about $(x_i,y_i)$ can be further obscured, by only sharing the mean $ \frac{1}{t} \sum_{i=1}^t\nabla_{\theta} \mathcal{L}_{\theta^k}(x_i, y_i)$ of the gradients of several local examples, which we refer to as the \textit{multi-image} setting. %In this paper we show that even \textit{multi-image federated averaging} on realistic architectures such as a ResNet-18 does not guarantee the privacy of the users data.}

Distributed learning of this kind has been used in real-world applications where user privacy is crucial, e.g. for hospital data \cite{jochems_developing_2017} or text predictions on mobile devices \cite{bonawitz_towards_2019}, and it has been stated that ``Privacy is enhanced by the ephemeral and focused nature of the [Federated Learning] updates'' \cite{bonawitz_towards_2019}: model updates are considered to contain less information than the original data, and through aggregation of updates from multiple data points, original data is considered impossible to recover. In this work we show analytically as well as empirically, that parameter gradients still carry significant information about the supposedly private input data as we illustrate in Fig.~\ref{fig:teaser}. We conclude by showing that even \textit{multi-image federated averaging} on realistic architectures does not guarantee the privacy of all user data, showing that out of a batch of 100 images, several are still recoverable.

\textbf{Threat model:} We investigate an \textit{honest-but-curious} server with the goal of uncovering user data: The attacker is allowed to separately store and process updates transmitted by individual users, but may \textit{not} interfere with the collaborative learning algorithm. The attacker may not modify the model architecture to better suit their attack, nor send malicious global parameters that do not represent the actually learned global model. The user is allowed to accumulate data locally in Sec.~\ref{sec:realistic}. We refer to the supp. material for further commentary and mention that the attack is near-trivial under weaker constraints on the attacker.

%Ideally, any such approach is combined with provable guarantees of differential privacy \cite{dwork_algorithmic_2013,mcmahan_learning_2018} and secure aggregation \cite{bonawitz_practical_2017}, however it is also generally understood that ``Privacy is enhanced by the ephemeral and focused nature of the [Federated Learning] updates'' \cite{bonawitz_towards_2019}: model updates are considered to contain less information than the original data, and through aggregation of updates from multiple data points, original data is considered impossible to recover. In this work we show analytically as well as empirically, that parameter gradients still carry significant information about the supposedly private input data as we illustrate in Fig.~\ref{fig:teaser}. 

%\mm{Could be taken out} From a standpoint of privacy, we are interested in possible leaks of \textit{user-level} privacy against a \textit{honest-but-curious} server. 

In this paper we discuss privacy limitations of federated learning first in an academic setting, honing in on the case of gradient inversion from one image and showing that
\begin{itemize}
    \item Reconstruction of input data from gradient information is possible for realistic deep and non-smooth architectures with both, trained and untrained parameters.
    \item With the right attack, there is little ``defense-in-depth" - deep networks are as vulnerable as shallow networks. 
    \item We prove that the input to any fully connected layer can be reconstructed analytically independent of the remaining network architecture.
    %\item Especially dishonest-and-curious servers (which may adapt the architecture or parameters maliciously) excel in information retrieval, and dishonesties can be as subtle as permuting some network parameters.
\end{itemize}
Then we consider the implications that the findings have for practical scenarios, finding that
\begin{itemize}
    %\item \jg{Reword this after finalizing Sec. 5}
    \item Reconstruction of multiple, separate input images from their averaged gradient is possible in practice, over multiple epochs, using local mini-batches, or even for a local gradient averaging of up to 100 images.
\end{itemize}

\section{Related Work}
Previous related works that investigate recovery from gradient information have been limited to shallow networks of less practical relevance. 
Recovery of image data from gradient information was first discussed in \cite{phong_privacy-preserving_2017,phong_privacy-preserving_2017-1} for neural networks, who prove that recovery is possible for a single neuron or linear layer. For convolutional architectures, \cite{wang_beyond_2018} show that recovery of a single image is possible for a 4-layer CNN, albeit with a significantly large fully-connected (FC) layer. Their work first constructs a ``representation" of the input image, that is then improved with a GAN. %This approach works well for datasets with low inter-class variation, such as in facial and digit recognition. 
\cite{zhu_deep_2019} extends this, showing for a 4-layer CNN (with a large FC layer, smooth sigmoid activations, no strides, uniformly random weights), that missing label information %(which was crucial for \cite{wang_beyond_2018} and guessed in \cite{phong_privacy-preserving_2017}) 
can also be jointly reconstructed. They further show that reconstruction of multiple images from their averaged gradients is indeed possible (for a maximum batch size of 8). \cite{zhu_deep_2019} also discuss deeper architectures, but provide no tangible results. A follow-up \cite{zhao_idlg_2020} notes that label information can be computed analytically from the gradients of the last layer.
These works make strong assumptions on the model architecture and model parameters that make reconstructions easier, but violate the threat model that we consider in this work and lead to less realistic scenarios.

The central recovery mechanism discussed in \cite{wang_beyond_2018,zhu_deep_2019,zhao_idlg_2020} is the optimization of an euclidean matching term.
The cost function 
\begin{equation}\label{eq:l2_penalty}
    \arg \min_x ||\nabla_\theta \mathcal{L}_\theta(x, y) - \nabla_\theta \mathcal{L}_\theta(x^*, y)||^2
\end{equation}
is minimized to recover the original input image $x^*$ from a transmitted gradient $\nabla_\theta \mathcal{L}_\theta(x^*, y)$. This optimization problem is solved by an L-BFGS solver \cite{liu_limited_1989}. Note that differentiating the gradient of $\mathcal{L}$ w.r.t to $x$ requires a second-order derivative of the considered parametrized function and L-BFGS needs to construct a third-order derivative approximation, which is challenging for neural networks with ReLU units for which higher-order derivatives are discontinuous.

A related, but easier problem, compared to the full reconstruction of input images, is the retrieval of input attributes \cite{melis_exploiting_2019,ganju_property_2018} from local updates, e.g. does a person that is recognized in a face recognition system wear a hat. Information even about attributes unrelated to the task at-hand can be recovered from deeper layers of a neural network, which can be recovered from local updates.

Our problem statement is furthermore related to model inversion \cite{fredrikson_model_2015}, where training images are recovered from network parameters after training. This provides a natural limit case for our setting. Model inversion generally is challenging for deeper neural network architectures \cite{zhang_secret_2019} if no additional information is given \cite{fredrikson_model_2015,zhang_secret_2019}.
Another closely related task is inversion from visual representations \cite{dosovitskiy_inverting_2016,dosovitskiy_generating_2016,mahendran_visualizing_2016}, where, given the output of some intermediate layer of a neural network, a plausible input image is reconstructed. This procedure can leak some information, e.g. general image composition, dominating colors - but, depending on the given layer it only reconstructs similar images - if the neural network is not explicitly chosen to be (mostly) invertible \cite{jacobsen_i-revnet_2018}. As we prove later, inversion from visual representations is strictly more difficult than recovery from gradient information.% for classification networks. 

\section{Theoretical Analysis: Recovering Images from their Gradients}
\label{chapter:fc_layers}
To understand the overall problem of breaking privacy in federated learning from a theoretical perspective, let us first analyze the question if data $x \in \R^n$ can be recovered from its gradient $\nabla_\theta \mathcal{L}_\theta(x,y) \in \R^p$ analytically.

Due to the different dimensionality of $x$ and $\nabla_\theta \mathcal{L}_\theta(x,y)$, reconstruction quality is surely is a question of the number of parameters $p$ versus input pixels $n$. If $p < n$, then reconstruction is at least as difficult as image recovery from incomplete data \cite{candes_robust_2006,benning_modern_2018-1}, but even when $p > n$, which we would expect in most computer vision applications, the difficulty of regularized ``inversion" of $\nabla_\theta \mathcal{L}_\theta$ relates to the non-linearity of the gradient operator as well as its conditioning.

Interestingly, fully-connected layers take a particular role in our problem: As we prove below, the input to a fully-connected layer can always be computed from the parameter gradients analytically independent of the layer's position in a neural network (provided that a technical condition, which prevents zero-gradients, is met). In particular, the analytic reconstruction is independent of the specific types of layers that precede or succeed the fully connected layer, and a single input to a fully-connected network can always be reconstructed analytically without solving an optimization problem.% like \eqref{eq:simrec}. %  preceding and succeeding layers.
The following statement is a generalization of Example 3 in \cite{phong_privacy-preserving_2017-1} to the setting of arbitrary neural networks with arbitrary loss functions:
\begin{proposition}
\label{prop:exactReconst}
Consider a neural network containing a biased fully-connected layer preceded solely by (possibly unbiased) fully-connected layers. Furthermore assume for any of those fully-connected layers the derivative of the loss $\mathcal L$ w.r.t.\ to the layer's output contains at least one non-zero entry. Then the input to the network can be reconstructed uniquely from the network's gradients.
\end{proposition}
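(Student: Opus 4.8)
The plan is to prove the proposition by a finite downward induction that peels off the fully-connected layers one at a time, starting from the biased one, using at each step only the rank-one structure of a fully-connected weight gradient together with the activation value recovered at the previous step.

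First I would fix notation. Index the fully-connected layers $1,\dots,k$ in the order in which the network input passes through them, with layer $k$ the biased one. Write $v_{j-1}$ for the input to layer $j$, $z_j=\theta_{A_j}v_{j-1}$ (plus $\theta_b$ when $j=k$) for its pre-activation output, and $v_j$ for its output after the interposed ReLU (or identity) activation, so that $v_0=x$ is the quantity to be recovered. Since the attacker knows both the parameters $\theta$ and the full gradient $\nabla_\theta\mathcal L$, every block $\tfrac{\diff \mathcal L}{\diff \theta_{A_j}}$ and $\tfrac{\diff \mathcal L}{\diff \theta_b}$ is available to it. The single identity that drives everything is
\[
    \frac{\diff \mathcal L}{\diff \theta_{A_j}} \;=\; \frac{\diff \mathcal L}{\diff z_j}\, v_{j-1}^{\top},
\]
i.e. the weight gradient is the outer product of the layer's output gradient with its input; in particular it has rank at most one, and each of its rows is a scalar multiple of $v_{j-1}^{\top}$.

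The base case is the biased layer $k$, where in addition $\tfrac{\diff \mathcal L}{\diff \theta_b}=\tfrac{\diff \mathcal L}{\diff z_k}$ is directly known; this is the point at which the generalization of Example~3 of \cite{phong_privacy-preserving_2017-1} enters. By hypothesis this vector has a nonzero coordinate $i$, and dividing the $i$-th row of $\tfrac{\diff \mathcal L}{\diff \theta_{A_k}}$ by $(\tfrac{\diff \mathcal L}{\diff z_k})_i$ returns $v_{k-1}^{\top}$; moreover $\tfrac{\diff \mathcal L}{\diff v_{k-1}}=\theta_{A_k}^{\top}\tfrac{\diff \mathcal L}{\diff z_k}$ is then also known. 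For the inductive step I would carry along the pair ``recovered input value $v_j$, recovered gradient $\tfrac{\diff \mathcal L}{\diff v_j}$'' and descend to layer $j$: for a ReLU activation the Jacobian is the diagonal indicator matrix with entries $\mathbf{1}[(z_j)_i>0]$, which coincide with $\mathbf{1}[(v_j)_i>0]$ and are therefore already known from $v_j$, so $\tfrac{\diff \mathcal L}{\diff z_j}$ is computable (for an identity activation this is trivial); the hypothesis supplies a nonzero coordinate $i$ of $\tfrac{\diff \mathcal L}{\diff z_j}$, another row division recovers $v_{j-1}^{\top}$, and $\tfrac{\diff \mathcal L}{\diff v_{j-1}}=\theta_{A_j}^{\top}\tfrac{\diff \mathcal L}{\diff z_j}$ sets up the next step. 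Iterating down to $j=1$ yields $v_0=x$, and since every operation is an explicit formula in the known parameters and gradients, $x$ is determined uniquely.

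The hard part will be the activation handling. ReLU is not invertible, so $z_j$ itself cannot be read off and a naive chaining argument stalls; the resolution — the step I expect to need the most care in writing up — is that one never needs $z_j$, only the sign pattern its ReLU produces, and that pattern is already encoded in the post-activation value $v_j$ reconstructed at the previous stage. Dead units contribute an all-zero row to $\tfrac{\diff \mathcal L}{\diff \theta_{A_j}}$ and are simply skipped, and the measure-zero event $(z_j)_i=0$ is absorbed into the usual subgradient convention. One also has to be consistent about reading ``the layer's output'' in the hypothesis as the pre-activation output $z_j$, since that is exactly what guarantees a nonzero entry of $\tfrac{\diff \mathcal L}{\diff z_j}$ sits on a live unit; with that reading the non-degeneracy assumptions chain precisely as required, and in particular $\neq 0$ rather than strict positivity is what is used.
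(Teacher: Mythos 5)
Your proof follows essentially the same route as the paper's: recover the input of the biased layer from the bias and weight gradients via the outer-product identity $\frac{\diff \mathcal{L}}{\diff \theta_{A_j}} = \frac{\diff \mathcal{L}}{\diff z_j} v_{j-1}^{\top}$, then peel off the preceding fully-connected layers iteratively by dividing a nonzero row of each weight gradient by the corresponding output derivative. The only difference is that you make explicit how $\frac{\diff \mathcal{L}}{\diff z_j}$ is propagated backwards through the known weights and the ReLU mask read off from the already-recovered activations (and you note the pre- versus post-activation reading of the non-vanishing hypothesis), details the paper's sketch and appendix leave implicit; the argument is correct and matches the paper's.
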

\begin{proof}
In the following we give a sketch of the proof and refer to the supplementary material for a  more detailed derivation. Consider an unbiased full-connected layer mapping the input $x_l$ to the output, after e.g. a ReLU nonlinearity: $x_{l+1} = \max \{ A_l x_l, 0 \}$ for a matrix $A_l$ of compatible dimensionality. By assumption it holds $\frac{\diff \cl}{\diff \lbr x_{l+1}\rbr_i} \neq 0$ for some index $i$. Then by the chain rule $x_l$ can be computed as $\lbr \frac{\diff \cl}{\diff \lbr x_{l+1}\rbr_i}\rbr^{-1} \cdot \lbr \frac{\diff \cl}{\diff \lbr A_l \rbr_{i, \colon}}\rbr^T$. This allows the iterative computation of the layers' inputs as soon as the derivative of $\cl$ w.rt.\ a certain layer's output is known. We conclude by noting that adding a bias can be interpreted as a layer mapping $x_k$ to $x_{k+1} = x_k + b_k$ and that $\frac{\diff \cl}{\diff x_k} = \frac{\diff \cl}{\diff b_k}$.
\end{proof}

Another interesting aspect in view of the above considerations is that many popular network architectures use fully-connected layers (or cascades thereof) as their last prediction layers. Hence the input to those prediction modules being the output of the previous layers can be reconstructed. Those activations usually already contain some information about the input image thus exposing them to attackers.
Especially interesting in that regard is the possibility to reconstruct the ground truth label information from the gradients of the last fully-connected layer as discussed in \cite{zhao_idlg_2020}. Finally, Prop.~\ref{prop:exactReconst} allows to conclude that for any classification network that ends with a fully connected layer, reconstructing the input from a parameter gradient is strictly easier than inverting visual representations, as discussed in \cite{dosovitskiy_inverting_2016,dosovitskiy_generating_2016,mahendran_visualizing_2016}, from their last convolutional layer.

\section{A Numerical Reconstruction Method}\label{sec:method}
As image classification networks rarely start with fully connected layers, let us turn to the numerical reconstruction of inputs: Previous reconstruction algorithms relied on two components; the euclidean cost function of Eq. \eqref{eq:l2_penalty} and optimization via L-BFGS. We argue that these choices are not optimal for more realistic architectures and especially arbitrary parameter vectors. If we decompose a parameter gradient into its norm magnitude and its direction, we find that the magnitude only captures information about the state of training, measuring local optimality of the datapoint with respect to the current model. In contrast, the high-dimensional direction of the gradient can carry significant information, as the angle between two data points quantifies the change in prediction at one datapoint when taking a gradient step towards another \cite{charpiat_input_2019,koh_understanding_2017}. As such we propose to use a cost function based on angles, i.e. cosine similarity, $l(x,y) = \langle x, y\rangle / (||x|| ||y|||)$. In comparison to Eq. \eqref{eq:l2_penalty}, the objective is not to find images with a gradient that best fits the observed gradient, but to find images that lead to a similar change in model prediction as the (unobserved!) ground truth.
This is equivalent to minimizing the euclidean cost function, if one additionally constrains both gradient vectors to be normalized to a magnitude of 1.

We further constrain our search space to images within $[0,1]$ and add only total variation \cite{rudin_nonlinear_1992} as a simple image prior  to the overall problem, cf. \cite{wang_beyond_2018}:
\begin{equation}\label{eq:simrec}
    \arg \min_{x \in [0,1]^n} 1 - \frac{\langle \nabla_\theta \mathcal{L}_\theta(x, y), \nabla_\theta \mathcal{L}_\theta(x^*, y) \rangle}{||\nabla_\theta \mathcal{L}_\theta(x, y)|| ||\nabla_\theta \mathcal{L}_\theta(x^*, y)||} + \alpha \operatorname{TV}(x).
\end{equation}
Secondly, we note that our goal of finding some inputs $x$ in a given interval by minimizing a quantity that depends (indirectly, via their gradients) on the outputs of intermediate layers, is related to the task of finding adversarial perturbations for neural networks \cite{szegedy_intriguing_2013,madry_towards_2017,athalye_obfuscated_2018}. As such, we minimize eq. \eqref{eq:simrec} only based on the sign of its gradient, which we optimize with Adam \cite{kingma_adam:_2015} with step size decay. Note though that signed gradients only affect the first and second order momentum for Adam, with the actual update step still being unsigned based on accumulated momentum, so that an image can still be accurately recovered. %We further augment this ``attack" by employing random restarts \cite{mosbach_logit_2019}. 

%Aside from this basic setting, we consider two improvements for special cases. First, for deep networks, we find that if we consider the gradient of each parameter (e.g weights, biases...) separately, then some gradients do not contribute as much to the reconstruction, especially if their gradient norm is small. In those cases, we apply a modification of eq. \eqref{eq:simrec}, recovering the input only from the gradients of the $N$ parameters with largest gradient norm, see appendix. Further, we note that for several ImageNet architectures, the first two layers (e.g. 7x7 convolutions with stride 2 and max pooling) introduce significant noise - we counter this by applying median filtering.

Applying these techniques leads to the reconstruction observed in Fig. \ref{fig:teaser}. Further ablation of the proposed mechanism can be found in the appendix. We provide a \texttt{pytorch} implementation at \url{https://github.com/JonasGeiping/invertinggradients}.
%can be found in the appendix and a PyTorch implementation within the supp. material.
\begin{remark}[Optimizing label information]
While we could also consider the label $y$ as unknown in Eq. \eqref{eq:simrec} and optimize jointly for $(x,y)$ as in \cite{zhu_deep_2019}, we follow \cite{zhao_idlg_2020} who find that label information can be reconstructed analytically for classification tasks. Thus, we consider label information to be known.
\end{remark}

\section{Single Image Reconstruction from a Single Gradient}\label{sec:arch}
Similar to previous works on breaking privacy in a federated learning setting, we first focus in the reconstruction of a single input image $x \in \R^n$ from the gradient $\nabla_\theta \mathcal{L}_\theta(x,y) \in \R^p$. This setting serves as a proof of concept as well as an upper bound on the reconstruction quality for the multi-image distributed learning settings we consider in Sec.~\ref{sec:realistic}. While previous works have already shown that a break of privacy is possible for single images, their experiments have been limited to rather shallow, smooth, and untrained networks. In the following, we compare our proposed approach to prior works, and conduct detailed experiments on the effect that architectural- as well as training-related choices have on the reconstruction. All
hyperparameter settings and more visual results for each experiment are provided in the supp. material.

\textbf{Comparison to previous approaches. } We first validate our approach by comparison to the Euclidean loss \eqref{eq:l2_penalty} optimized via L-BFGS considered in \cite{wang_beyond_2018,zhu_deep_2019,zhao_idlg_2020}. This approach can often fail due to a bad initialization, so we allow a generous setting of 16 restarts of the L-BFGS solver. For a quantitative comparison we measure the mean PSNR of the reconstruction of $32 \times 32$ CIFAR-10 images over the first 100 images from the validation set using the same shallow and smooth CNN as in \cite{zhu_deep_2019}, which we denote as "LeNet (Zhu)" as well as a ResNet architecture, both with trained and untrained parameters. Table \ref{tab:zhu} compares the reconstruction quality of euclidean loss \eqref{eq:l2_penalty} with L-BFGS optimization (as in \cite{wang_beyond_2018,zhu_deep_2019,zhao_idlg_2020}) with the proposed approach. The former works extremely well for the untrained, smooth, shallow architecture, but completely fails on the trained ResNet. We note that \cite{wang_beyond_2018} applied a GAN to enhance image quality from the LBFGS reconstruction, which, however, fails, when the representative is too distorted to be enhanced. Our approach provides recognizable images and works particularly well on the realistic setting of a trained ResNet as we can see in Figure~\ref{fig:baseline_comparison}. Interestingly, the reconstructions on the trained ResNet have a better visual quality than those of the untrained ResNet, despite their lower PSNR values according to table \ref{tab:zhu}. Let us study the effect of trained network parameters in an even more realistic setting, i.e., for reconstructing ImageNet images from a ResNet-152.

\begin{figure}[t]
    \begin{tabular}{lcccccc}
    \shortstack{\textbf{Eucl. Loss} +\\\textbf{L-BFGS}\\Untrained ResNet}
    &\includegraphics[width=0.1\textwidth]{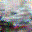} &\includegraphics[width=0.1\textwidth]{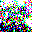} &\includegraphics[width=0.1\textwidth]{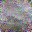} &\includegraphics[width=0.1\textwidth]{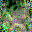} &\includegraphics[width=0.1\textwidth]{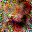} &\includegraphics[width=0.1\textwidth]{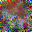} \\
    \shortstack{\textbf{Proposed}\\Untrained ResNet}  &\includegraphics[width=0.1\textwidth]{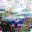} &\includegraphics[width=0.1\textwidth]{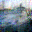} &\includegraphics[width=0.1\textwidth]{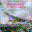} &\includegraphics[width=0.1\textwidth]{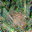} &\includegraphics[width=0.1\textwidth]{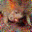} &\includegraphics[width=0.1\textwidth]{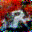} \\
    \hline\\
    
    \shortstack{\textbf{Euclidean Loss} +\\ \textbf{L-BFGS}\\Trained ResNet} &\includegraphics[width=0.1\textwidth]{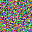} &\includegraphics[width=0.1\textwidth]{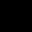} &\includegraphics[width=0.1\textwidth]{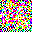} &\includegraphics[width=0.1\textwidth]{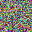} &\includegraphics[width=0.1\textwidth]{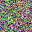} &\includegraphics[width=0.1\textwidth]{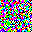} \\
    \shortstack{\textbf{Proposed}\\Trained ResNet}  &\includegraphics[width=0.1\textwidth]{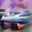} &\includegraphics[width=0.1\textwidth]{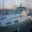} &\includegraphics[width=0.1\textwidth]{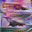} &\includegraphics[width=0.1\textwidth]{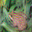} &\includegraphics[width=0.1\textwidth]{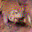} &\includegraphics[width=0.1\textwidth]{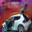}
    \end{tabular}
    \caption{Baseline comparison for the network architectures shown in \cite{wang_beyond_2018,zhu_deep_2019}.We show the first 6 images from the CIFAR-10 validation set.}
    \label{fig:baseline_comparison}
\end{figure}
\begin{comment}
\begin{table}
\caption{PSNR statistics for 100 experiments on the first images of the CIFAR-10 validation data set.}
\label{tab:zhu}
\centering
\begin{tabular}{|c|c|c|c|c|c|c|c|c|}
\hline
Method & \multicolumn{4}{c|}{Euclidean Loss and L-BFGS} & \multicolumn{4}{c|}{Cosine similarity and signed Adam (ours)}\\
\hline
Architecture & \multicolumn{2}{c|}{LeNet (Zhu)} & \multicolumn{2}{c|}{ResNet20-4} & \multicolumn{2}{c|}{LeNetZhu} & \multicolumn{2}{c|}{ResNet20-4}\\
\hline
Trained & False & True & False & True & False & True & False & True\\
\hline
PSNR mean &46.25&13.24&0&0&18.00&18.08&19.83&13.95\\
\hline
PSNR std. &12.66&5.44&0&0&3.33&4.27&2.96&3.38\\
\hline
\end{tabular}
\end{table}
\end{comment}

\begin{table}
\caption{PSNR mean and standard deviation for 100 experiments on the first images of the CIFAR-10 validation data set over two different networks with trained an untrained parameters.}
\label{tab:zhu}
\centering
\begin{tabular}{lcccc}
\hline
Architecture & \multicolumn{2}{c}{LeNet (Zhu)} & \multicolumn{2}{c}{ResNet20-4} \\
Trained & False & True & False & True \\
\hline
Eucl. Loss + L-BFGS & $\boldsymbol{46.25 \pm 12.66}$ & $13.24 \pm 5.44 $& $10.29 \pm 5.38 $&$6.90 \pm 2.80$ \\
Proposed & $18.00 \pm 3.33$ & $\boldsymbol{18.08 \pm 4.27}$   &  $\boldsymbol{19.83 \pm 2.96}$ &  $\boldsymbol{13.95\pm 3.38}$  \\
\hline
\end{tabular}

\end{table}

\textbf{Trained vs. untrained networks. } If a network is trained and has sufficient capacity for the gradient of the loss function $\mathcal{L}_\theta$ to be zero for different inputs, it is obvious that they can never be distinguished from their gradient. In practical settings, however, owing to stochastic gradient descent, data augmentation and a finite number of training epochs, the gradient of images is rarely entirely zero. While we do observe that image gradients have a much smaller magnitude in a trained network than in an untrained one, our magnitude-oblivious approach of \eqref{eq:simrec} still recovers important visual information from the direction of the trained gradients. 

We observe two general effects on trained networks that we illustrate with our ImageNet reconstructions in Fig.~\ref{fig:imagenetstuff}: First, reconstructions seem to become implicitly biased to typical features of the same class in the training data, e.g., the more blueish feathers of the capercaillie in the 5th image, or the large eyes of the owl in our teaser \ref{fig:teaser}. Thus, although the overall privacy of most images is clearly breached, this effect at least obstructs the recovery of fine scale details or the image's background. 

Second, we find that the data augmentation used during the training of neural networks leads to trained networks that make the localization of objects more difficult: Notice how few of the objects in Fig. \ref{fig:imagenetstuff} retain their original position and how the snake and gecko duplicate. Thus, although image reconstruction with networks trained with data augmentation still succeeds, some location information is lost.

\begin{figure}
    \includegraphics[width=0.12\linewidth]{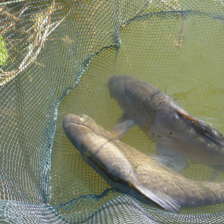}
    \includegraphics[width=0.12\linewidth]{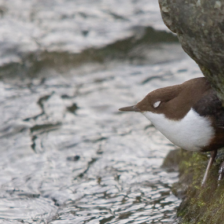}
    \includegraphics[width=0.12\linewidth]{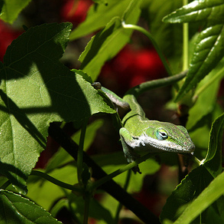}
    \includegraphics[width=0.12\linewidth]{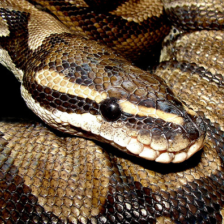}
    \includegraphics[width=0.12\linewidth]{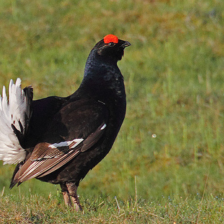}
    \includegraphics[width=0.12\linewidth]{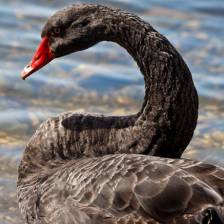}
    \includegraphics[width=0.12\linewidth]{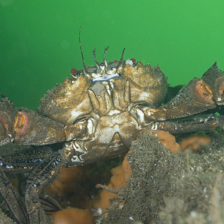}
    \includegraphics[width=0.12\linewidth]{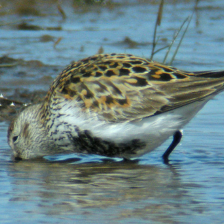}\\
    \includegraphics[width=0.12\linewidth]{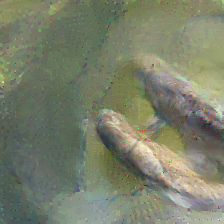}
    \includegraphics[width=0.12\linewidth]{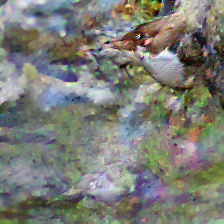}
    \includegraphics[width=0.12\linewidth]{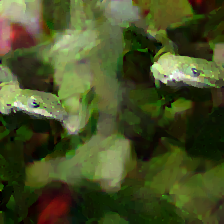}
    \includegraphics[width=0.12\linewidth]{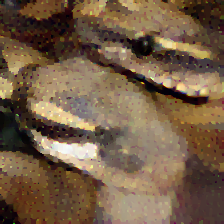}
    \includegraphics[width=0.12\linewidth]{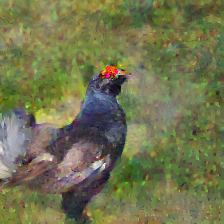}
    \includegraphics[width=0.12\linewidth]{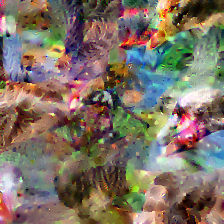}
    \includegraphics[width=0.12\linewidth]{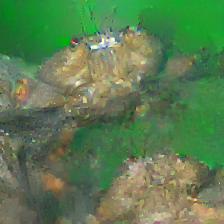}
    \includegraphics[width=0.12\linewidth]{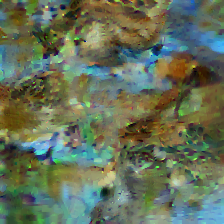}
    \caption{Single-Image Reconstruction from the parameter gradients of trained ResNet-152. Top row: Ground Truth. Bottom row: Reconstruction. We check every 1000th image of the ILSVRC2012 validation set. The amount of information leaked per image is highly dependent on image content - while some examples like the two tenches are highly compromised, the black swan leaks almost no usable information.}
    \label{fig:imagenetstuff}
\end{figure}

\begin{wrapfigure}[5]{r}{0.34\textwidth}
    \vspace{-15pt}% :>>
    \centering
    \includegraphics[width=0.14\textwidth]{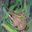} % Avg
    \includegraphics[width=0.14\textwidth]{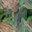} %CircAvg
  %\caption{\label{fig:circ_images} Reconstruction for a \textit{ConvNet} architecture, replacing the last pooling layer by an avg. pooling layer with spatial dim. 1 (left) and additionally replacing zero padding with circular padding (right).}
  \label{fig:circ_images}
\end{wrapfigure}\textbf{Translational invariant convolutions.} Let us study the ability to obscure the location of objects in more detail by testing how a conventional convolutional neural network, that uses convolutions with zero-padding, compares to a provably translationally invariant CNN, that uses convolutions with circular padding. As shown in the inset figure, while the conventional CNN allows for recovery of a rather high quality image (left), the translationally invariant network makes the localization of objects impossible (right) as the original object is separated. As such we identify the common zero-padding as a source of privacy risk.

%%%%%%%\textbf{Network Width.}\\%%%%%%%%%%%%%
\textbf{Network Depth and Width.} 
For classification accuracy, the depth and number of channels of each layer of a CNN are very important parameters, which is why we study their influence on our reconstruction. 
 Figure \ref{tab:depth_arch} shows that the reconstruction quality measurably increase with the number of channels. Yet, the larger network width is also accompanied with an increasing variance of experimental success.
 %
 %Yet, due to the increasing variance, a larger network width 
 %occasionally leads to a excellent reconstructions.
 %Vorschlag
 However with multiple restarts of the experiment, better reconstructions can be produced for wider networks,
 %Vorschlag
 resulting in PSNR values that increases from $19$ to almost $23$ for when increasing the number of channels from $16$ to $128$.%
 As such, greater network width increases the computational effort of the attacker, but does not provide greater security.

%\mm{As illustrated in Figure~\ref{tab:depth_arch}, the channel width of the network does not strongly influence average reconstruction quality. By looking at the figure, I'd disagree.} The displayed width values each describe the channel dimension of the first convolution layer of the networks. However, we find a tendency of stronger outliers, as measured in the PSNR standard deviation, with greater width, correlating with a more complex optimization problem. 
%
%influence on the reconstruction quality in that a wider network with more channel dimensions has a similar reconstruction quality on average, but stronger outliers towards good and bad quality. This becomes apparent from the standard deviation of the PSNR values in the experiments, which increase by increasing the channel dimension (see Fig.\;\ref{tab:depth_arch}).
%from around $2.9$ to $7.6$ by changing the width of the resnet18 from 16 channels to 128 channels.
%\mm{the following seems contradictory to the previous...} Multiple reconstruction runs thus lead to better reconstructions, especially for wider networks. Greater width as such increases the computational effort of the attacker, but does not provide greater security.

Looking at the reconstruction results we obtain from ResNets with different depths, the proposed attack degrades very little with an increased depth of the network. In particular - as illustrated in Fig.~\ref{fig:imagenetstuff}, even faithful ImageNet reconstructions through a ResNet-152 are possible. 
% Besides the width, we also test the influence of the depth of the network on the reconstruction quality using ResNet architectures of different depths, finding that in contrast to Table \ref{tab:zhu}, where the Euclidean loss dropped in success between the shallow LeNet and the deep ResNet, that the proposed attack based on cosine similarity is not hindered by increased depth of the network. 
\newcommand\wid{1.8}
\newcommand\widless{1.7}
\begin{figure}
\centering
\begin{tabular}{p{\wid cm}p{\wid cm}p{\wid cm}p{\wid cm}p{\wid cm}p{\wid cm}}
%\hline 
\textbf{Original} & \multicolumn{3}{c}{\textbf{ResNet-18} with base width:} & \textbf{ResNet-34} & \textbf{ResNet-50}\\
 & 16 & 64 & 128  &  & \\ \hline 
\includegraphics[width=\widless cm]{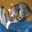}     & 
\includegraphics[width=\widless cm]{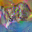}      & 
\includegraphics[width=\widless cm]{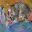}     & 
\includegraphics[width=\widless cm]{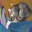}      & 
\includegraphics[width=\widless cm]{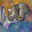}      & 
\includegraphics[width=\widless cm]{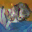}     \\ %\hline
PSNR & 17.24 &17.37 & 25.25 &18.62& 21.36\\ 
Avg. PSNR     &19.02     &22.04     &22.94      & 21.59      & 20.98\\
Std.     &2.84     &5.89     &6.83     & 4.49      & 5.57 \\ %\hline
&&&&\\[-0.5em]
\end{tabular}
\caption{Reconstructions of the original image (left) for multiple ResNet architectures. %The mean PSNR is calculated over 10 images.
The PSNR value refers to the displayed image while the avg. PSNR is calculated over the first 10 CIFAR-10 images. The standard deviation is the average standard deviation of one experiment under a given architecture.
The ResNet-18 architecture is displayed for three different widths. %, that correspond to the widths that are listed in Fig.\;\ref{fig:feature_map_dimension} for the ResNet-20 architecture.%, that is evaluated with setting (B). 
%While wider networks make the reconstruction significantly easier, the depth does not influence the reconstruction quality under the proposed attack. 
%Wider and less deep networks make the reconstruction significantly easier.
}
\label{tab:depth_arch}
\end{figure}

\section{Distributed Learning with Federated Averaging and Multiple Images}\label{sec:realistic}
So far we have only considered recovery of a single image from its gradient and discussed limitations and possibilities in this setting. We now turn to strictly more difficult generalized setting of \emph{Federated Averaging} \cite{mcmahan_communication-efficient_2017,mcmahan_learning_2018,reddi_adaptive_2020} and \textit{multi-image} reconstruction, to show that the proposed improvements translate to this more practical case as well, discussing possibilities and limits in this application.

Instead of only calculating the gradient of a network's parameters based on local data, federated averaging performs multiple update steps on local data before sending the updated parameters back to the server. Following the notation of \cite{mcmahan_communication-efficient_2017}, we let the local data on the user's side consist of $n$ images. For a number $E$ of local epochs the user performs $\frac{n}{B}$ stochastic gradient update steps per epoch, where $B$ denotes the local mini-batch size, resulting in a total number of $E \frac{n}{B}$ local update steps. Each user $i$ then sends the locally updated parameters $\tilde \theta^{k+1}_i$ back to the server, which in turn updates the global parameters $\theta^{k+1}$ by averaging over all users.
%\begin{align}
%    \theta^{k+1} = \frac{1}{N}\sum_{i=1}^N \tilde \theta^{k+1}_i.
%\end{align}
\iffalse
While this approach benefits from a improved training performance \cite{mcmahan_communication-efficient_2017} compared to methods only conducting one local update step, like federated SGD, it also faces inherent difficulties like \emph{client drift}, which is the user's bias toward his own data \cite{zhao2018federated} - practically limiting in the number of local gradient descent steps.
\fi

\begin{figure}[bth]
\centering
    \begin{tabular}{cccc}
    \centering
   \ \  1 step, $\tau=$1e-4 \ \ & \ \ 100 steps, $\tau=$1e-4 \ \  & \ \ 5 steps, $\tau=$1e-1 \ \  &\ \  5 steps, $\tau=$1e-2 \ \  \\
    \includegraphics[width=0.15\textwidth]{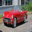}&
    \includegraphics[width=0.15\textwidth]{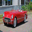}&
    \includegraphics[width=0.15\textwidth]{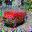}&
    \includegraphics[width=0.15\textwidth]{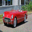}\\
    19.77dB & 19.39dB & 4.96dB & 23.74 dB
    \end{tabular}
    \caption{Illustrating the influence of the number of local update steps and the learning rate on the reconstruction: The left two images compare the influence of the number of gradient descent steps for a fixed learning rate of $\tau=$1e-4. The two images on the right result from varying the learning rate for a fixed number of 5 gradient descent steps. PSNR values are shown below the images.}
    \label{fig:fedavg}
\end{figure}

We empirically show that even the setting of federated averaging with $n\geq 1$ images is potentially amenable for attacks. To do so we try to reconstruct the local batch of $n$ images by the knowledge of the local update $\tilde \theta^{k+1}_i - \theta^k$. In the following we evaluate the quality of the reconstructed images for different choices of $n$, $E$ and $B$. We note that the setting studied in the previous sections corresponds to $n=1$, $E=1$, $B=1$. For all our experiments we use an untrained ConvNet.

\textbf{Multiple gradient descent steps, $B=n=1$, $E>1$:}\\
Fig.~\ref{fig:fedavg} shows the reconstruction of $n=1$ image for a varying number of local epochs $E$ and different choices of learning rate $\tau$. Even for a high number of 100 local gradient descent steps the reconstruction quality is unimpeded. The only failure case we were able to exemplify was induced by picking a high learning rate of 1e-1. This setup, however, corresponds to a step size that would lead to a divergent training update, and as such does not provide useful model updates.

\textbf{Multi-Image Recovery, $B=n>1$, $E=1$}:\\
So far we have considered the recovery of a single image only, and it seems reasonable to believe that averaging the gradients of multiple (local) images before sending an update to the server, restores the privacy of federated learning. While such a multi-image recovery has been considered in \cite{zhu_deep_2019} for $B\leq 8$, we demonstrate that the proposed approach is capable of restoring some information from a batch of 100 averaged gradients: While most recovered images are unrecognizable (as shown in the supplementary material), Fig. \ref{fig:multi_cifar} shows the 5 most recognizable images and illustrates that even averaging the gradient of 100 images does not entirely secure the private data. Most suprising is that the distortions arising from batching are not uniform. One could have expected all images to be equally distorted and near-irrecoverable, yet some images are highly distorted and others only to an extend at which the pictured object can still be recognized easily.
\begin{figure}[t]
    \centering
    \includegraphics[width=\textwidth]{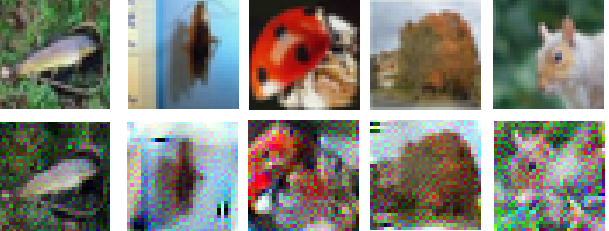}
    \caption{Information leakage for a batch of 100 images on CIFAR-100 for a ResNet32-10. Shown are the 5 \textit{most} recognizable images from the whole batch. Although most images are unrecognizable, privacy is broken even in a large-batch setting. We refer to the supplementary material for all images.}
    \label{fig:multi_cifar}
\end{figure}

\textbf{General case}\\
We also consider the general case of multiple local update steps using a subset of the whole local data in each mini batch gradient step. An overview of all conducted experiments is provided in Table \ref{tab:fedavg}. %\mm{Diesen Satz finde ich etwas schwierig zu verstehen: }Each local parameter update in the corresponding setting is considered as experiment and PSNR statistics are calculated on 100 experiments of consecutive images of the CIFAR-10 validation data set.
For each setting we perform 100 experiments on the CIFAR-10 validation set. For multiple images in a mini batch we only use images of different labels avoiding permutation ambiguities of reconstructed images of the same label. As to be expected, the single image reconstruction turns out to be most amenable to attacks in terms of PSNRs values. Despite a lower performance in terms of PSNR, we still observe privacy leakage for all multi-image reconstruction tasks, including those in which gradients in random mini-batches are taken. Comparing the full-batch, 8 images examples for 1 and 5 epochs, we see that our previous observation that multiple epochs do not make the reconstruction problem more difficult, extends to multiple images. For a qualitative assessment of reconstructed images of all experimental settings of Table~\ref{tab:fedavg}, we refer to the supplementary material. 
%\hb{Probably better to move experimental details to supplementary material}
% \begin{table}
% \centering
% \begin{tabular}{lccccccccc}
% \hline
% local epochs $E$ & \multicolumn{3}{c}{\textbf{1}} & \multicolumn{2}{c}{\textbf{5}}\\
% \hline
% number of images $n$ & \textbf{4} &\multicolumn{2}{c}{\textbf{8}} & \textbf{1} & \textbf{8}\\
% \hline
% batch size $B$ & \textbf{2} & \textbf{2} & \textbf{8} & \textbf{1} & \textbf{8} \\
% \hline
% \hline
% PSNR mean $\pm$ std. & $16.92 \pm  2.10$  &  $14.66 \pm 1.12  $ &  $16.49\pm 1.02  $ &  $25.05  \pm  3.28 $  &  $ 16.58 \pm 0.96  $ \\
% \\
% \end{tabular}
% \caption{PSNR statistics for 100 experiments on the first images of the CIFAR-10 validation data set. \jg{Können wir das Format dieser Figure verbessern?}}
% \label{tab:fedavg}
% \end{table}

% %\begin{comment}
% \begin{table}
% \centering
% \begin{tabular}{lccccccccc}
% setting ($E$,$n$,$B$) & (1,4,2) & (1,8,2) & (1,8,8) & (5,1,1) & (5,8,8)\\
% \hline
% PSNR mean $\pm$ std. & $16.92 \pm  2.10$  &  $14.66 \pm 1.12  $ &  $16.49\pm 1.02  $ &  $25.05  \pm  3.28 $  &  $ 16.58 \pm 0.96  $ \\
% \\
% \end{tabular}
% \caption{PSNR statistics for 100 experiments on the first images of the CIFAR-10 validation data set. }
% \label{tab:fedavg}
% \end{table}
% %\end{comment}

\begin{table}
\caption{PSNR statistics for various federated averaging settings, averaged over experiments on the first 100 images of the CIFAR-10 validation data set.}
\label{tab:fedavg}
\centering
\begin{tabular}{ccccccccc}
\hline
 \multicolumn{3}{c}{1 epoch} & \multicolumn{2}{c}{{5 epochs}}\\
\hline
 {4 images } &\multicolumn{2}{c}{{8 images}} & {1 image} & {8 images}\\
\hline
 batchsize 2 & batchsize 2 & batchsize 8 & batchsize 1 & batchsize 8 \\
\hline
\hline
 $\boldsymbol{16.92 \pm  2.10}$  &  $\boldsymbol{14.66 \pm 1.12  }$ &  $\boldsymbol{16.49\pm 1.02  }$ &  $\boldsymbol{25.05  \pm  3.28 }$  &  $\boldsymbol{ 16.58 \pm 0.96  }$ \\
\\
\end{tabular}
\end{table}

\section{Conclusions}
Federated learning is a modern paradigm shift in distributed computing, yet its benefits to privacy are not as well understood yet. We shed light into possible avenues of attack, analyzed the ability to reconstruct the input to any fully connected layer analytically, propose a general optimization-based attack, and discuss its effectiveness for different types of architectures %(discussing width, depth, convolutional padding and pooling) 
and network parameters. Our experimental results are obtained with modern computer vision architectures for image classification. They clearly indicate that provable differential privacy remains the only way to guarantee security, possibly even for larger batches of data points.

%--------------------------------Page 8 should end here.---------------------------------------------------
%\newpage
\section*{Broader Impact - Federated Learning does not guarantee privacy}
Recent works on privacy attacks in federated learning setups (\cite{phong_privacy-preserving_2017,phong_privacy-preserving_2017-1,wang_beyond_2018,zhu_deep_2019, zhao_idlg_2020}) have hinted at the fact that previous hopes that
``Privacy is enhanced by the ephemeral and focused nature of the [Federated Learning] updates'' \cite{bonawitz_towards_2019} are not true in general. In this work, we demonstrated that improved optimization strategies such as a cosine similarity loss and a signed Adam optimizer allow for image recovery in a federated learning setup in industrially realistic settings for computer vision: Opposed to the idealized architectures of previous works we demonstrate that image recovery is possible in deep non-smooth architectures over multiple federated averaging steps of the optimizer and even in batches of 100 images.

We note that image classification is possibly especially vulnerable to these types of attacks, given the inherent structure of image data, the size of image classification networks, and the comparatively small number of images a single user might own, relative to other personal information. On the other hand, this attack is likely only a first step towards stronger attacks.
Therefore, this work points out that the question how to protect the privacy of our data while collaboratively training highly accurate machine learning approaches remains largely unsolved: While differential privacy offers provable guarantees, it also reduces the accuracy of the resulting models significantly \cite{jayaraman_evaluating_2019-1}. As such differential privacy and secure aggregation can be costly to implement so that there is some economic incentive for data companies to use only basic federated learning. For a more general discussion, see \cite{veale_algorithms_2018}. Thus, there is strong interest in further research on privacy preserving learning techniques that render the attacks proposed in this paper ineffective. This might happen via defensive mechanisms or via computable guarantees that allow practitioners to verify whether their application is vulnerable to such an attack. %\jg{Habe den "as a society", Nebensatz gekürzt, ich denke das Statement ist so klarer?}

%\jg{discuss limitations of current guarantees for federated learning, where is the secure limit? This is likely not even the strongest attack...}

%Modern machine learning models trained collaboratively on a large number of devices have the potential to be kinda cool....
%
%But limits of federated learning ... 
%
%not clearly discussed ...
%
%
%fasdkasüdaksdoaüo

%Authors are required to include a statement of the broader impact of their work, including its ethical aspects and future societal consequences. 
%Authors should discuss both positive and negative outcomes, if any. For instance, authors should discuss a) 
%who may benefit from this research, b) who may be put at disadvantage from this research, c) what are the consequences of failure of the system, and d) whether the task/method leverages biases in the data. If authors believe this is not applicable to them, authors can simply state this.

%\clearpage

{\small
\bibliographystyle{plain}
\bibliography{zotero_library,additional}
}

\appendix 

\section{Variations of the threat model}\label{sec:threatmodel}
In this work we consider a \textit{honest-but-curious} threat model as discussed in the introduction. Straying from this scenario could be done primarily in two ways: First by changing the architecture, and second by keeping the architecture non-malicious, but changing the global parameters sent to the user.

\subsection{Dishonest Architectures}
So far we assumed that the server operates under an \textit{honest-but-curious} model, and as such would not modify the model maliciously to make reconstruction easier. If we instead allow for this, then reconstruction becomes nearly trivial:
Several mechanisms could be used: Following Prop. 1, the server could, for example, place a fully-connected layer in the first layer, or even directly connect the input to the end of the network by concatenation. Slightly less obvious, the model could be modified to contain reversible blocks \cite{chang_reversible_2017,jacobsen_i-revnet_2018}. These blocks allow the recovery of input from their outputs. From Prop. 1 we know that we can reconstruct the input to the classification layer, so this allows for immediate access to the input image. If the server maliciously introduces separate weights or sub-models for each batch example, then this also allows for a recovery of an arbitrarily large batch of data.
Operating in a setting, where such behavior is possible would require the user (or a provider trusted by the user) to vet any incoming model either manually or programmatically. 

\subsection{Dishonest Parameter Vectors}
However, even with a fixed \textit{honest} architecture, a malicious choice of global parameters can significantly influence reconstruction quality. For example, considering the network architecture in \cite{zhu_deep_2019} which does not contain strides and flattens convolutional features, the dishonest server could set all convolution layers to represent the identity \cite{goldblum_truth_2019}, moving the input through the network unchanged up to the classification layer, from which the input can be analytically computed as in Prop. 1. Likewise for an architecture that contains strides to a recognizable lower resolution \cite{wang_beyond_2018}, the input can be recovered immediately albeit in a smaller resolution when the right parameter vector is sent to the user.

Such a specific choice of parameters is however likely detectable. A subtler approach, as least possible in theory, would be to optimize the network parameters themselves that are sent to the user so that reconstruction quality from these parameters is maximized. While such an attack is likely to be difficult to detect on the user-side, it would also be very computationally intensive.

\textbf{Label flipping.}
There is even a cheaper alternative. According to Sec. 5, very small gradient vectors may contain less information. A simple way for a dishonest server to boost these gradients is to permute two rows in the weight matrix and bias of the classification layer, effectively flipping the semantic meaning of a label. This attack is difficult to detect for the user (as long as the gradient magnitude stays within usual bounds), but effectively tricks him into differentiating his network w.r.t to the wrong label. Fig. \ref{fig:label_flip} shows that this mechanism can allow for a reliable reconstruction with boosted PSNR scores, as the effect of the trained model is negated.
\begin{figure}
    \centering
        \begin{tabular}{c | c |c |c |c |c |c |c }
    \includegraphics[width=0.1\textwidth]{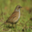} &
    \multicolumn{1}{c}{\includegraphics[width=0.1\textwidth]{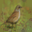}} &
    \includegraphics[width=0.1\textwidth]{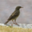} &
    \multicolumn{1}{c}{\includegraphics[width=0.1\textwidth]{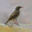}} &
    \includegraphics[width=0.1\textwidth]{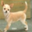} &
    \multicolumn{1}{c}{\includegraphics[width=0.1\textwidth]{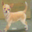}} &
    \includegraphics[width=0.1\textwidth]{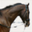} &
    \multicolumn{1}{c}{\includegraphics[width=0.1\textwidth]{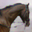}} \\
    5.9e-1 &  29.37dB & 4.6e+2 & 26.62dB & 1.8e+2 &  27.37dB & 1.5e+2 & 18.27dB \\
    \end{tabular}
    \caption{Label flipping. Images can be easily reconstructed when two rows in the parameters of the final classification layer are permuted. Below each input image is given the gradient magnitude, below each output image its PSNR. Compare these results to the additional examples in Fig. \ref{fig:training_images_vs_val_images}}
    \label{fig:label_flip}
\end{figure}

\section{Experimental Details}\label{sec:topk}
% \subsection{Choosing top-$N$ gradients}
% We note in Sec. 3, that we can improve performance on deep networks by recovering the input only from the gradients of the $N$ parameters with largest norm. To clarify this method we need to briefly depart from the notation of the remaining paper. Instead of considering $\nabla_\theta \mathcal{L}_\theta$ to be a single vector in $\R^p$, note that we can separately consider the gradient of each 'parameter' of the network, meaning each tensor, w.r.t to which the network is differentiated, e.g. convolutional filters of a single layer, biases and weight matrices of fully-connected layers, so that $\nabla_\theta \mathcal{L}_\theta(x,y)$ can be represented by $\lbrace \nabla_{\theta_1}\mathcal{L}_\theta(x,y),\dots, \nabla_{\theta_P}\mathcal{L}_\theta(x,y)  \rbrace$ for $P$ parameters. 

% We may now compute the norm of each of the gradient vectors separately. From these $P$ norm values we choose the $N$ values that are largest, i.e. the $N$ parameters that contain the largest gradients and collect their indices in the set $M$. This leads to a minor modification of the proposed cost function that can now be specified as:
% \begin{equation}\label{simrec:top10}
%     \arg \min_{x \in [0,1]^n} 1 - \frac{\sum_{j \in M} \langle \nabla_{\theta^j} \mathcal{L}_{\theta}(x, y), \nabla_{\theta^j} \mathcal{L}_{\theta}(x^*, y) \rangle}{\sqrt{\sum_{j \in M} ||\nabla_{\theta^j} \mathcal{L}_\theta(x, y)||^2} \sqrt{\sum_{j \in M} ||\nabla_{\theta^j} \mathcal{L}_\theta(x^*, y)||^2}} + \alpha \operatorname{TV}(x).    
% \end{equation}

\begin{figure}[h]
    \centering
    \includegraphics[width=0.2\textwidth]{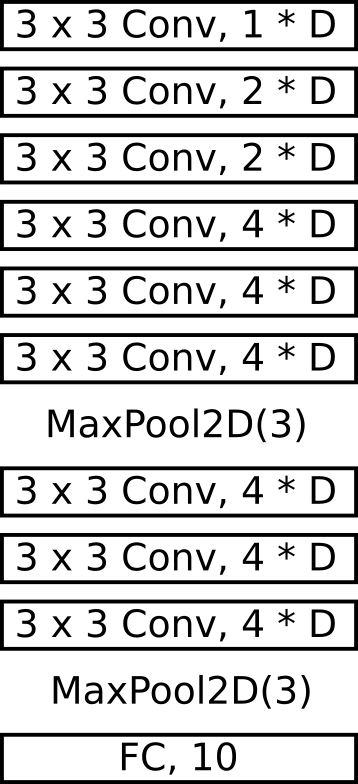}
    \caption{Network architecture \textit{ConvNet}, consisting of 8 convolution layers, specified with corresponding number of output channels. Each convolution layer is followed by a batch normalization layer and a ReLU layer. $D$ scales the number of output channels and is set to $D = 64$ by default.}
    \label{fig:convnet_arch}
\end{figure}

\subsection{Federated Averaging}
The extension of Eq. (4) to the case of federated averaging (in which multiple local update steps are taken and sent back to the server) is straightforward. Notice first, that given old parameters $\theta^k$, local updates $\theta^{k+l}$, learning rate $\tau$, and knowledge about the number of update steps\footnote{We assume that the number of local updates is known to the server, yet this could also be found by brute-force, given that $l$ is a small integer.}, the update can be rewritten as the average of updated gradients.
\begin{equation}
    \theta^{k+l} = \theta^k - \tau \sum_{m=1}^l \nabla_{\theta^{k+m}} \mathcal{L}_{\theta^{k+m}}(x, y)
\end{equation}
Subtracting $\theta^k$ from $\theta^{k+l}$, we simply apply the proposed approach to the resulting average of updates:
\begin{equation}\label{eq:simrec_avg}
    \arg \min_{x \in [0,1]^n} 1 - \frac{ \langle \sum_{m=1}^l \nabla_{\theta^{k+m}} \mathcal{L}_{\theta^{k+m}}(x, y), \sum_{m=1}^l \nabla_{\theta^{k+m}} \mathcal{L}_{\theta^{k+m}}(x^*, y) \rangle}{||\sum_{m=1}^l \nabla_{\theta^{k+m}} \mathcal{L}_{\theta^{k+m}}(x, y)|| ||\sum_{m=1}^l \nabla_{\theta^{k+m}} \mathcal{L}_{\theta^{k+m}}(x^*, y)||} + \alpha \operatorname{TV}(x).
\end{equation}
Using automatic differentiation, we backpropagate the gradient w.r.t to $x$ from the average of update steps.

\subsection{ConvNet}
We use a ConvNet architecture as a baseline for our experiments as it is relatively fast to optimize, reaches above 90\% accuracy on CIFAR-10 and includes two max-pooling layers. It is a rough analogue to AlexNet \cite{krizhevsky_imagenet_2012}. The architecture is described in Fig. \ref{fig:convnet_arch}.
\begin{table}
\centering
\caption{\label{table:ablation} Ablation Study for the proposed approach for a trained ResNet-18 architecture, trained on CIFAR-10. Reconstruction PSNR scores are averaged over the first 10 images of the CIFAR-10 validation set (Standard Error in parentheses).}
\begin{tabular}{|c | c|}
\hline
Basic Setup         & $20.12$ dB  ($\pm 1.02$)\\ \hline
L2 Loss instead of cosine similarity & $15.13$ dB ($\pm 0.70$) \\\hline
Without total variation                & $19.96$ dB  ($\pm 0.75$)  \\ \hline
%Gradient Descent instead of Adam & -2.34 dB \\ \hline
%Without box constraints    & 19.37 dB \\ \hline
With L-BFGS instead of Adam  & $5.13$ dB  ($\pm 0.50$) \\ \hline
%L2 Loss instead of cosine similarity & -0.38 dB\\ \hline
\end{tabular} %\hspace{0.5cm}
\end{table}
\subsection{Ablation Study}
We provide an ablation for proposed choices in Table \ref{table:ablation}. We note that two things are central, the Adam optimizer and the similarity loss. Total variation is a small benefit, and using signed gradients is a minor benefit. %The box constraints do not seem to help in this case, yet we keep them in any case to always guarantee that a valid image is recovered.

\section{Hyperparameter Settings}\label{sec:hyperparam}
In our experiments we reconstruct the network's input using Adam based on signed gradients as optimization algorithm and cosine similarity as cost function as described in Sec. 4. 
It is important to note that the optimal hyperparameters for the attack depend on the specific attack scenario - that the attack fails with default parameters is no guarantee for security. We always initialize our reconstructions from a Gaussian distribution with mean 0 and variance 1 (Note that the input data is normalized as usual for all considered datasets) and set the step size of the optimization algorithm within $[0.01, 1]$. We use a smaller step sizes of $0.1$, for the wider and deeper networks in Sec. 5.2 and a larger step sizes of $1$ for the federated averaging experiments in Sec 6, with $0.1$ being the default choice. The optimization runs for up to $24 000$ iterations. The step size decay is always fixed, occuring after $\frac{3}{8}$, $\frac{5}{8}$ and $\frac{7}{8}$ of iterations and reducing the learning rate by a factor of $0.1$ each time. The number of iterations is a generally conservative estimate, privacy can often be broken much earlier.

We tweak the total variation parameter depending on the specific attack scenario, however note that its effect on avg. PSNR is mostly minor as seen in table \ref{table:ablation}. When not otherwise noted we default to a value of $0.01$.  
\begin{remark}[Restarts]
Generally, multiple restarts of the attack from different random initializations can improve the attack success moderately. However they also increase the computational requirements significantly. To allow for quantitative experimental evaluations of multiple images, we do not consider restarts in this work (aside from Sec. 5 where we apply them to improve results of the competing LBFGS solver) - but stress that an attacker with enough ressources could further improve his attack by running it with multiple restarts. 
\end{remark} 

% \subsection{Settings for section 5}
% For comparison with baselines in section 5, we re-implemented the networks from \cite{wang_beyond_2018} and \cite{zhu_deep_2019}, respectively and show reconstructions with the same LBFGS-L2 parameters as in their respective papers. For the comparison to more complicated CNNs, namely the ConvNet (Fig. \ref{fig:convnet_arch}) and the ResNet20-4 \cite{zagoruyko_wide_2016}, we try to optimize parameters for LBFGS-L2. For the ConvNet, we ended up keeping the suggestions from \cite{wang_beyond_2018}, for the ResNet, we reduce the learning rate from \cite{zhu_deep_2019} and also apply the top-$N$ scheme, both of these changes improve the accuracy of the LBFGS-L2 approach for the ResNet.

% For our proposed method we apply settings (A) with a reduced step size of 0.01 for both MNIST experiments and settings (B) for the ResNet experiment. For the smooth network from \cite{zhu_deep_2019}, we reduce our step size to $0.0001$ and increase the number of iterations to 48000.

\subsection{Settings for the experiments in Sec.\;5}

% \begin{table}
% \caption{\label{table:settings_arch} Three settings for the reconstruction of a network input, which were applied in the experiments in Sec.\;4.2}
% \centering
% \begin{tabular}{p{3.1cm}p{1.5cm}}
% \hline
% Setting (A)               &      \\ \hline \hline %Setting Hannah 
% Step Size          & 0.01  \\ \hline
% %Signed                  & True \\ \hline
% %Boxed                   & True \\ \hline
% Top-N & all \\\hline
% Restarts                & 1   \\ \hline
% Number of Iterations & 24000 \\ \hline
% Total Variation    & 1e-2 \\ \hline
% Pretrained              & True (10~Epochs) \\ \hline
% &\\
% \end{tabular} %\hspace{0.5cm}
% \begin{tabular}{p{3.1cm}p{1.5cm}}
% \hline
% Setting (Jonas)               &      \\ \hline \hline
% Step Size              & 0.1  \\ \hline
% %Signed                  & True \\ \hline
% %Boxed                   & True \\ \hline
% Top-N & all \\\hline
% Restarts                & 1   \\ \hline
% Number of Iterations & 24000 \\ \hline
% Total Variation    & 1e-2 \\ \hline
% Pretrained              & True (fully) \\ \hline
% &\\
% \end{tabular}

% \begin{tabular}{p{3.6cm}p{2.5cm}}
% \hline
% Setting (Hartmut)               &      \\ \hline \hline
% Step Size               & 0.1  \\ \hline
% %Signed                  & True \\ \hline
% %Boxed                   & True \\ \hline
% Top-N & Top 10 \\\hline
% Restarts                & 16   \\ \hline
% Number of Iterations & 12000 \\ \hline
% Total Variation    & 1e-4 \\ \hline
% Pretrained              & True (10 Epochs) \\ \hline
% &\\
% \end{tabular}
% \end{table}

\paragraph{Comparison to previous approaches}

For comparison with baselines in section 5, we re-implement the network from \cite{zhu_deep_2019}, which we dub LeNet (Zhu) in the following, and additionally run all experiments for the ResNet20-4 architecture. We base both the network and the approach on code from the authors of \cite{zhu_deep_2019}, \footnote{\url{https://github.com/mit-han-lab/dlg}}. For the LBFGS-L2 optimization we use a learning rate of $1e-4$ and $300$ iterations. For the ResNet experiments we use the generous amount of $8$ restarts and for the faster to optimize LeNet (Zhu) architecture we use the even higher number of $16$ restarts. All experiment conducted with the proposed approach only use one restart, $4800$ iterations, a learning rate of $0.1$ and TV regularization parameters as detailed in Table \ref{tab:baseline}. Note that in the described settings the proposed method took significantly less time to optimize than the LBFGS optimization.
\paragraph{Spatial Information}
The experiments on spatial information are performed on the ConvNet architecture with $D =64$ channels.

\begin{table}[]
    \centering
    \begin{tabular}{ccccc}
    \hline
     Architecture & \multicolumn{2}{c}{LeNet (Zhu) } & \multicolumn{2}{c}{ResNet20-4} \\
     \hline
     Trained & False & True & False & True\\
     \hline\hline
     TV & $10^{-2}$ & $10^{-3}$ & $0$ & $10^{-2}$\\
     \hline
    \end{tabular}
    \caption{TV regularization values used for the proposed approach in the baseline experiments of Section 5.}
    \label{tab:baseline}
\end{table}

\subsection{Setting for experiments in Sec. 6}
For the five cases consider in Table 2 we consider an untrained ConvNet, a learning rate of $1$, $4800$ iterations, one restart and the TV regularization parameters as given in table \ref{table:fedavg}. Each of the $100$ experiments uses different images, i.e.\ each experiments uses the images of the CIFAR-10 validation set following the ones used in the previous experiment. As multiple images of the same label in one mini-batch cause an ambiguity in the ordering of images w.r.t.\ that label, we do not consider that case. If an image with an already encountered label is about to be added to the respective mini-batch we skip that image and use the next image of the validation set with a different label. 
\begin{table}
\centering
\begin{tabular}{p{3.6cm} ccccc}
\hline
Number of epochs $E$               &1&1&1&5&5    \\ \hline
Number of local images $n$               &4&8&8&1&8     \\ \hline
Mini-batch size $B$               &2&2&8&1&8    \\ \hline \hline
TV              &$10^{-6}$&$10^{-6}$&$10^{-4}$&$10^{-4}$&$10^{-4}$
\end{tabular}
\caption{\label{table:fedavg} Total variation weights for the reconstruction of network input in the experiments in Sec.\;4.2}
\end{table}
\section{Proofs for section 3.1}\label{sec:proofs}
In the following we give a more detailed proof of Prop 3.1, which is follows directly from the two propositions below:
\setcounter{proposition}{0}
\begin{proposition}\label{prop:bfc_repeat}
Let a neural network contain a biased fully-connected layer at some point, i.e.\ for the layer's input $x_l\in\R^{n_l}$ its output $x_{l+1}\in\R^{n_{l+1}}$ is calculated as $x_{l+1} = \max \{ y_l, 0 \}$ for
\begin{align}
    y_l = A_l x_l + b_l,
\end{align}
for $A_l \in \R^{n_{l+1}\times n_l}$ and $b_l \in \R^n_{l+1}$. Then the input $x_l$ can be reconstructed from $\frac{\diff \cl}{\diff A_l}$ and $\frac{\diff \cl}{\diff b_l}$, if there exists an index $i$ s.t.\ $\frac{\diff \cl}{\diff \lbr b_l \rbr_i} \neq 0$.
\end{proposition}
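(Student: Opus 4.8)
The plan is to unwind the chain rule through the affine map $y_l = A_l x_l + b_l$ and observe that the gradients with respect to \emph{both} parameter blocks factor through the pre-activation $y_l$; a single nonzero coordinate of $\frac{\diff \cl}{\diff b_l}$ then lets us divide that common factor out and read off $x_l$.

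First I would note that, viewed as a function of the layer's parameters and of its input, the loss depends on $A_l$, $b_l$ and $x_l$ only through $y_l$: the ReLU output $x_{l+1} = \max\{y_l,0\}$, all subsequent layers, and the final scalar loss are determined by $y_l$ alone. Consequently I may collapse everything downstream of the affine map into a single (unknown) vector $g := \frac{\diff \cl}{\diff y_l}\in\R^{n_{l+1}}$. This step is where the claimed independence of the layer's position enters: whatever precedes the layer merely fixes the numerical value of $x_l$, and whatever follows — including the specific nonlinearity — is absorbed into $g$, so no structural assumption on the rest of the network is used.

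Next, differentiating $\lbr y_l\rbr_k = \sum_j \lbr A_l\rbr_{k,j}\lbr x_l\rbr_j + \lbr b_l\rbr_k$ and applying the chain rule componentwise gives
\[
\frac{\diff \cl}{\diff \lbr b_l\rbr_i} = g_i,
\qquad
\frac{\diff \cl}{\diff \lbr A_l\rbr_{i,j}} = g_i\,\lbr x_l\rbr_j
\quad\text{for all } i,j,
\]
so that the $i$-th row of $\frac{\diff \cl}{\diff A_l}$ equals $g_i\,x_l^\top$ while $\frac{\diff \cl}{\diff b_l} = g$. Picking an index $i$ with $g_i = \frac{\diff \cl}{\diff \lbr b_l\rbr_i}\neq 0$ — which exists by hypothesis — and dividing yields the explicit and manifestly unique reconstruction
\[
x_l \;=\; \lbr \frac{\diff \cl}{\diff \lbr b_l\rbr_i}\rbr^{-1}\lbr \frac{\diff \cl}{\diff \lbr A_l\rbr_{i,\colon}}\rbr^\top .
\]

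I do not anticipate a genuine obstacle: the core of the argument is a one-line chain-rule identity, and the only point worth a word of care is the non-differentiability of the ReLU. This is handled by the same factorization — whichever (sub)gradient the backpropagation procedure assigns at the kinks, the displayed identities hold for that same $g$, so the recovered $x_l$ is exactly consistent with the transmitted parameter gradients. The substance of the statement is therefore not the calculation but the observation that the common factor $g$ can be eliminated using only the reported gradients, and that this elimination is insensitive to the surrounding architecture.
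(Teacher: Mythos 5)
Your argument is correct and follows essentially the same route as the paper's own proof: both identify $\frac{\diff \cl}{\diff \lbr b_l\rbr_i}$ with $\frac{\diff \cl}{\diff \lbr y_l\rbr_i}$ via the chain rule, note that the $i$-th row of $\frac{\diff \cl}{\diff A_l}$ equals that scalar times $x_l^T$, and divide by the nonzero bias gradient to recover $x_l$. The added remark on ReLU subgradients is a fine clarification but does not change the argument.
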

\begin{proof}
It holds that $\frac{\diff \cl}{\diff \lbr b_l \rbr_i} = \frac{\diff \cl}{\diff \lbr y_l \rbr_i}$ and $\frac{\diff y_i}{\diff \lbr A_l \rbr_{i,\colon}} = x^T$. Therefore
\begin{align}
    \frac{\diff \cl}{\diff \lbr A_l \rbr_{i,\colon}} &= \frac{\diff \cl}{\diff \lbr y_l \rbr_i} \cdot \frac{\diff \lbr y_l \rbr_i}{\diff \lbr A_l \rbr_{i,\colon}}\\
    &= \frac{\diff \cl}{\diff \lbr b_l \rbr_i} \cdot x_l^T
\end{align}
for $\lbr A_l \rbr_{i,\colon}$ denoting the $i$\textsuperscript{th} row of $A_l$. Hence $x_l$ can can be uniquely determined as soon as $\frac{\diff \cl}{\diff \lbr b_l \rbr_i} \neq 0$.
\end{proof}

\begin{proposition}\label{prop:fc_repeat}
Consider a fully-connected layer (not necessarily including a bias) followed by a ReLU activation function, i.e.\ for an input $x_l \in \R^{n_l}$ the output $x_{l+1} \in \R^{n_{l+1}}$ is calculated as $x_{l+1} = \max \{ y_l, 0 \}$ for
\begin{align}
    y_l = A_l x_l,
\end{align}
where the maximum is computed element-wise. Now assume we have the additional knowledge of the derivative w.r.t.\ to the output $\frac{\diff \cl}{\diff x_{l+1}}$. Furthermore assume there exists an index $i$ s.t.\ $\frac{\diff \cl}{\diff \lbr x_{l+1} \rbr_i} \neq 0$. Then the input $v$ can be derived from the knowledge of $\frac{\diff \cl}{\diff A_l}$.
\end{proposition}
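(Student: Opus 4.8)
The plan is to unravel $\frac{\diff \cl}{\diff A_l}$ row by row with the chain rule, exactly as in the proof of Proposition~\ref{prop:bfc_repeat}, the only new ingredient being that knowledge of a bias gradient is replaced by the assumed knowledge of $\frac{\diff \cl}{\diff x_{l+1}}$. Since $\lbr y_l\rbr_j = \lbr A_l\rbr_{j,\colon}\, x_l$ depends only on the $j$-th row of $A_l$, the chain rule gives for every $j$
\[
    \frac{\diff \cl}{\diff \lbr A_l\rbr_{j,\colon}} \;=\; \frac{\diff \cl}{\diff \lbr y_l\rbr_j}\, x_l^T, \qquad \frac{\diff \cl}{\diff \lbr y_l\rbr_j} \;=\; \frac{\diff \cl}{\diff \lbr x_{l+1}\rbr_j}\cdot \mathbb{1}\{\lbr y_l\rbr_j > 0\},
\]
the second identity being the element-wise derivative of the ReLU. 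Hence every row of $\frac{\diff \cl}{\diff A_l}$ is a scalar multiple of $x_l^T$, and the scalar is read off from $\frac{\diff \cl}{\diff x_{l+1}}$ together with the sign pattern of $y_l$.

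Concretely, if there is an index $j$ at which the pre-activation is positive and $\frac{\diff \cl}{\diff \lbr x_{l+1}\rbr_j}\neq 0$, then $\frac{\diff \cl}{\diff \lbr y_l\rbr_j}=\frac{\diff \cl}{\diff \lbr x_{l+1}\rbr_j}$ is a nonzero \emph{known} number, so
\[
    x_l \;=\; \lbr \frac{\diff \cl}{\diff \lbr x_{l+1}\rbr_j}\rbr^{-1}\lbr \frac{\diff \cl}{\diff \lbr A_l\rbr_{j,\colon}}\rbr^{T}.
\]
Because the right-hand side is an explicit function of the observed quantities $\frac{\diff \cl}{\diff A_l}$ and $\frac{\diff \cl}{\diff x_{l+1}}$, this both reconstructs $x_l$ and shows it is the unique input compatible with them, which finishes the proof; fed into the preceding layer, it is exactly the recursion used in Proposition~\ref{prop:exactReconst}.

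The step I expect to be the main obstacle is justifying the existence of such an index $j$, i.e.\ one that is simultaneously ReLU-active ($\lbr y_l\rbr_j>0$) and carries a nonzero upstream gradient ($\frac{\diff \cl}{\diff \lbr x_{l+1}\rbr_j}\neq0$); this is precisely the ``technical condition preventing zero gradients'' flagged in the introduction. In the biased case of Proposition~\ref{prop:bfc_repeat} it is automatic, because $\frac{\diff \cl}{\diff \lbr b_l\rbr_i}=\frac{\diff \cl}{\diff \lbr y_l\rbr_i}\neq0$ already forces $\lbr y_l\rbr_i>0$; without a bias the bare hypothesis $\frac{\diff \cl}{\diff \lbr x_{l+1}\rbr_i}\neq0$ does not, since if $\lbr y_l\rbr_i\le 0$ then the whole $i$-th row of $\frac{\diff \cl}{\diff A_l}$ vanishes independently of $x_l$. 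I would close this by reading the hypothesis in the slightly sharper (and, in the recursive setting, automatically available) form that some row $i$ of $\frac{\diff \cl}{\diff A_l}$ is nonzero --- equivalently, an active unit with nonzero output gradient --- which immediately yields $\lbr y_l\rbr_i>0$, $\frac{\diff \cl}{\diff \lbr x_{l+1}\rbr_i}\neq0$, and the displayed formula; the only input excluded this way is $x_l=0$, for which the reconstruction is trivial. Alternatively, since in Proposition~\ref{prop:exactReconst} the vector $x_{l+1}$ has itself already been recovered as the input to the succeeding layer, its support reveals the active set, and one simply selects $i$ among the active coordinates on which $\frac{\diff \cl}{\diff x_{l+1}}$ does not vanish.
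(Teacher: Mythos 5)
Your proof follows essentially the same route as the paper's: the same chain-rule identity $\frac{\diff \cl}{\diff \lbr A_l\rbr_{i,\colon}} = \frac{\diff \cl}{\diff \lbr x_{l+1}\rbr_i}\, x_l^T$ for a suitable row $i$, followed by rescaling that row to read off $x_l$. You are in fact more careful than the paper, whose proof silently equates $\frac{\diff \cl}{\diff \lbr y_l\rbr_i}$ with $\frac{\diff \cl}{\diff \lbr x_{l+1}\rbr_i}$ without noting that this requires unit $i$ to be ReLU-active; your reading of the hypothesis as a nonzero row of $\frac{\diff \cl}{\diff A_l}$ (equivalently, an active unit with nonzero upstream gradient) correctly closes that small gap.
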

\begin{proof}
As $\frac{\diff \cl}{\diff \lbr x_{l+1} \rbr_i} \neq 0$ it holds that $\frac{\diff \cl}{\diff \lbr y_l \rbr_i} = \frac{\diff \cl}{\diff \lbr x_{l+1} \rbr_i}$ and it follows that
\begin{align}
    \frac{\diff \cl}{\diff \lbr A_l \rbr_{i,\colon}} &= \frac{\diff \cl}{\diff \lbr y_l \rbr_i} \cdot \frac{\diff \lbr y_l \rbr_i}{\diff \lbr A_l \rbr_{i,\colon}}\\
    &= \frac{\diff \cl}{\diff \lbr x_{l+1}\rbr_i} \cdot x_l^T.
\end{align}
\end{proof}

\section{Additional Examples}\label{sec:examples}

\subsection{Additional CIFAR-10 examples}
Figure \ref{fig:training_images_vs_val_images} shows additional "extreme" examples for CIFAR-10, reconstructing the image with lowest and the image with largest gradient magnitude for the training and validation set of CIFAR-10 for trained and untrained ConvNet and ResNet20-4 models.
\begin{figure}[t]
    \centering
    \begin{tabular}{c|c|c|c|c|c|c|c}
    \multicolumn{8}{c}{\textbf{Trained ConvNet}} \\
		\multicolumn{4}{c}{Images from the training set } & \multicolumn{4}{c}{Images from the validation set} \\
     \includegraphics[width=0.1\textwidth]{eccv2020kit/images/parameters/ConvNet64_input_low_train.png}  &  
     \includegraphics[width=0.1\textwidth]{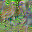} &
     \includegraphics[width=0.1\textwidth]{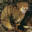} &
     \includegraphics[width=0.1\textwidth]{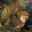}  &
     \includegraphics[width=0.1\textwidth]{eccv2020kit/images/parameters/ConvNet64_input_low_val.png} &
     \includegraphics[width=0.1\textwidth]{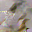}&
     \includegraphics[width=0.1\textwidth]{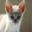} &
     \includegraphics[width=0.1\textwidth]{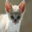}
     \\
     4.5e-21 & 18.04dB & 2.5e+02 & 14.85dB & 9.8e-17 & 14.60dB & 5.5e+02 & 30.26dB \\
     \multicolumn{8}{c}{}\\
     \multicolumn{8}{c}{\textbf{Trained ResNet20-4}}\\
		\multicolumn{4}{c}{Images from the training set } & \multicolumn{4}{c}{Images from the validation set} \\
     \includegraphics[width=0.1\textwidth]{eccv2020kit/images/parameters/ResNet20-4_input_low_train.png}  &  
     \includegraphics[width=0.1\textwidth]{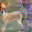} &
     \includegraphics[width=0.1\textwidth]{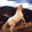} &
     \includegraphics[width=0.1\textwidth]{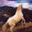}  &
     \includegraphics[width=0.1\textwidth]{eccv2020kit/images/parameters/ResNet20-4_input_low_val.png} &
     \includegraphics[width=0.1\textwidth]{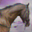}&
     \includegraphics[width=0.1\textwidth]{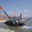} &
     \includegraphics[width=0.1\textwidth]{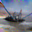}
     \\
     5.3-06 & 15.21dB & 1.0e+2 & 19.75dB & 1.2e-5 & 13.84dB & 4.6e+2 & 15.53dB \\
 \multicolumn{8}{c}{}\\
    \multicolumn{4}{c}{\textbf{Untrained ConvNet}} & \multicolumn{4}{c}{\textbf{Untrained ResNet20-4}} \\
     \includegraphics[width=0.1\textwidth]{eccv2020kit/images/parameters/ConvNet64_input_low_train.png} &
     \includegraphics[width=0.1\textwidth]{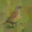} &
     \includegraphics[width=0.1\textwidth]{eccv2020kit/images/parameters/ConvNet64_input_low_val.png} &
     \includegraphics[width=0.1\textwidth]{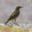}  &  
    \includegraphics[width=0.1\textwidth]{eccv2020kit/images/parameters/ResNet20-4_input_low_train.png} &
    \includegraphics[width=0.1\textwidth]{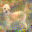} &
    \includegraphics[width=0.1\textwidth]{eccv2020kit/images/parameters/ResNet20-4_input_low_val.png} &
    \includegraphics[width=0.1\textwidth]{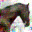} \\
    6.1e-1 &  31.36dB & 6.7e-1 & 31.16dB & 3.8e+1 & 21.90dB & 4.5e+1 & 20.23dB \\
    \end{tabular}
    \caption{Reconstruction of images for the \textit{trained} ConvNet model (Top) and ResNet20-4 (middle). We show reconstructions of the \textbf{worst-case} image and \textbf{best case} image from CIFAR-10, based on gradient magnitude for both the training and the validation set. Below each input image is given the gradient magnitude, below each output image its PSNR. The bottom row shows reconstructions for the worst-case examples for untrained models.}
    \label{fig:training_images_vs_val_images}
\end{figure}

\subsection{Visualization of experiments in Sec. 5}
%%%% WIDTH %%%%%
%\hd{todo/not forget: change Table name Hannah in Text to something else}
\paragraph{Network Width}
The reconstructions for the first six CIFAR images for different width ResNet-18 architectures are given in Fig.\;\ref{tab:width_all_results}.

\newcommand\widw{1.1}
\newcommand\widlessw{1.4}
\begin{figure}[h!]
\centering
\begin{tabular}{l P{\widw cm}P{\widw cm}P{\widw cm}P{\widw cm}P{\widw cm}P{\widw cm}P{\widw cm}}
16 Channels  &  
\includegraphics[width=\widlessw cm]{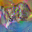}&  
\includegraphics[width=\widlessw cm]{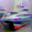}&  
\includegraphics[width=\widlessw cm]{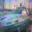}&  
\includegraphics[width=\widlessw cm]{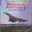}&  
\includegraphics[width=\widlessw cm]{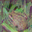}&  
\includegraphics[width=\widlessw cm]{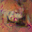}&  
\includegraphics[width=\widlessw cm]{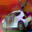}\\
64 Channels  &  
\includegraphics[width=\widlessw cm]{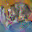}&
\includegraphics[width=\widlessw cm]{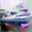}&
\includegraphics[width=\widlessw cm]{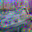}&
\includegraphics[width=\widlessw cm]{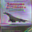}&
\includegraphics[width=\widlessw cm]{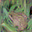}&
\includegraphics[width=\widlessw cm]{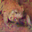}&
\includegraphics[width=\widlessw cm]{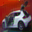}\\
128 Channels & 
\includegraphics[width=\widlessw cm]{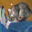}&
\includegraphics[width=\widlessw cm]{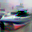}&
\includegraphics[width=\widlessw cm]{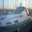}&
\includegraphics[width=\widlessw cm]{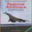}&
\includegraphics[width=\widlessw cm]{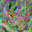}&
\includegraphics[width=\widlessw cm]{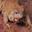}&
\includegraphics[width=\widlessw cm]{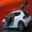}\\
\end{tabular}
\caption{Reconstructions using ResNet-18 architectures with different widths.}
\label{tab:width_all_results}
\end{figure}

%%%% DEPTH %%%%%
\paragraph{Network Depth}
The experiments concerning the network depth are performed for different deep ResNet architectures. Multiple reconstruction results for different deep networks are shown in Fig.\;\ref{tab:depth_all_results}. 
% \newcommand\widd{1.9}
% \newcommand\widlessd{1.6}
% \begin{figure}[]
% \centering
% \begin{tabular}{P{\widd cm}P{\widd cm}P{\widd cm}P{\widd cm}P{\widd cm}P{\widd cm}}
% \hline
% Original & ResNet-18 & ResNet-34 & ResNet-50& ResNet-101  & ResNet-152 \\
% \includegraphics[width=\widlessd cm]{}     & 
% \includegraphics[width=\widlessd cm]{}     & 
% \includegraphics[width=\widlessd cm]{}      & 
% \includegraphics[width=\widlessd cm]{}     & 
% \includegraphics[width=\widlessd cm]{}      & 
% \includegraphics[width=\widlessd cm]{} \\ \hline
%       &6.78    & 5.54     & 2.63   &3.27   & 6.89 \\ \hline
% &&&&\\[-0.5em]
% \end{tabular}
% \caption{Reconstructions using different deep ResNet architectures, obtained using setting (C).}
% \label{tab:depth_all_results}
% \end{figure}

\begin{figure}[h!]
\centering
\begin{tabular}{l P{\widw cm}P{\widw cm}P{\widw cm}P{\widw cm}P{\widw cm}P{\widw cm}P{\widw cm}}
ResNet-18  &  
\includegraphics[width=\widlessw cm]{neurips2020/image_width_depth_new/images_width/2469742211_trainedResNet18-FM_sim-def_iter-0.png}&  
\includegraphics[width=\widlessw cm]{neurips2020/image_width_depth_new/images_width/2469742211_trainedResNet18-FM_sim-def_iter-1.png}&  
\includegraphics[width=\widlessw cm]{neurips2020/image_width_depth_new/images_width/2469742211_trainedResNet18-FM_sim-def_iter-2.png}&  
\includegraphics[width=\widlessw cm]{neurips2020/image_width_depth_new/images_width/2469742211_trainedResNet18-FM_sim-def_iter-3.png}&  
\includegraphics[width=\widlessw cm]{neurips2020/image_width_depth_new/images_width/2469742211_trainedResNet18-FM_sim-def_iter-4.png}&  
\includegraphics[width=\widlessw cm]{neurips2020/image_width_depth_new/images_width/2469742211_trainedResNet18-FM_sim-def_iter-5.png}&  
\includegraphics[width=\widlessw cm]{neurips2020/image_width_depth_new/images_width/2469742211_trainedResNet18-FM_sim-def_iter-6.png}\\
ResNet-34 &  
\includegraphics[width=\widlessw cm]{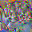}&
\includegraphics[width=\widlessw cm]{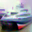}&
\includegraphics[width=\widlessw cm]{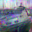}&
\includegraphics[width=\widlessw cm]{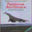}&
\includegraphics[width=\widlessw cm]{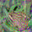}&
\includegraphics[width=\widlessw cm]{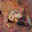}&
\includegraphics[width=\widlessw cm]{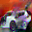}\\
ResNet-50 & 
\includegraphics[width=\widlessw cm]{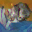}&
\includegraphics[width=\widlessw cm]{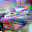}&
\includegraphics[width=\widlessw cm]{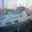}&
\includegraphics[width=\widlessw cm]{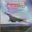}&
\includegraphics[width=\widlessw cm]{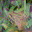}&
\includegraphics[width=\widlessw cm]{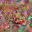}&
\includegraphics[width=\widlessw cm]{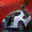}\\
\end{tabular}
\caption{Reconstructions using different deep ResNet architectures.}
\label{tab:depth_all_results}
\end{figure}

%%%% AVGMAX %%%%%
% \paragraph{Pooling Layer}
% Results for a reconstruction using the max pooling layer and using the average pooling layer for the \textit{ConvNet} are visualized in Fig.\;\ref{fig:avg_pooling}.
% \begin{figure}
%   \centering
%     \includegraphics[width=0.22\textwidth]{} % Avg
%     \includegraphics[width=0.22\textwidth]{} % Max
%     \caption{Reconstructions on the \textit{ConvNet} architecture using average pooling (left) for the pooling layers and using max pooling (right) for pooling layers.}
%     \label{fig:avg_pooling}
% \end{figure}

% \subsection{Settings for Section 5}
% For Fig. 6 we use settings (A) for all experiments, but increase the total variation penalty to $1e-3$ and the number of iterations to $24000$.
% For Fig. 7 we use settings (B), but increase the number of iterations to $24000$. For the untrained networks, we enable median filtering as discussed in Sec. 3.1 and the trained networks, we only increase the total variation penalty to $0.1$.
% \subsection{Setting for Section 6}
% For the multi-image recovery we also use settings (A), but increase the number of iterations to $24000$ and the total variation penalty to $0.01$. Due to the depth of the considered network (WideResNet 32-10 \cite{zagoruyko_wide_2016}), we also pretrain to 10 epochs. 

\subsection{More ImageNet examples for Sec. 5}
Fig. \ref{fig:moreImageNet} shows further instructive examples of reconstructions for ImageNet validation images for a trained ResNet-18 (the same setup as Fig. 3 in the main paper). We show a very good reconstruction (German shepherd), a good, but translated reconstruction (giant panda) and two failure cases (ambulance and flower). For the ambulance, for example, the actual writing on the ambulance car is still hidden. For the flower, the exact number of petals is hidden. Also, note how the reconstruction of the giant panda is much clearer than that of the tree stump co-occurring in the image, which we consider an indicator of the self-regularizing effect described in Sec. 5.

Figures \ref{fig:imagenetstuff3} and \ref{fig:imagenetstuff2} show more examples. We note that the examples in these figures and in Figure 3 are not handpicked, but chosen neutrally according to their ID in the ILSVRC2012, ImageNet, validation set. The ID for each image is obtained by sorting the synset that make up the dataset in increasing order according to their synset ID and sorting the images within each synset according to their synset ID in increasing order. This is the default order in \texttt{torchvision}.

\begin{figure}
    \centering
    \includegraphics[width=0.22\textwidth]{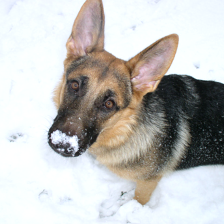}
    \includegraphics[width=0.22\textwidth]{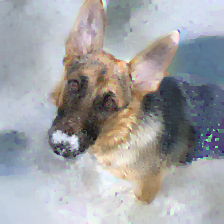}
    \includegraphics[width=0.22\textwidth]{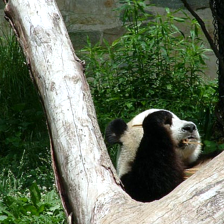}
    \includegraphics[width=0.22\textwidth]{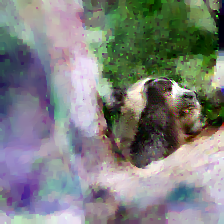}\\
    \includegraphics[width=0.22\textwidth]{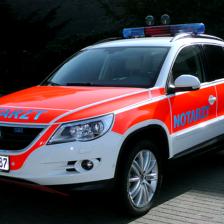}
    \includegraphics[width=0.22\textwidth]{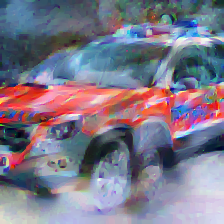}
    \includegraphics[width=0.22\textwidth]{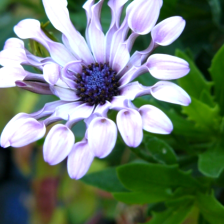}
    \includegraphics[width=0.22\textwidth]{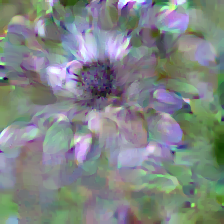}
    \caption{Additional qualitative ImageNet examples, failure cases and positive cases for a trained ResNet-18. Images taken from the ILSVRC2012 validation set.}
    \label{fig:moreImageNet}
\end{figure}

\begin{figure}
\centering
    \includegraphics[width=0.19\textwidth]{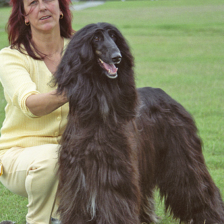}
    \includegraphics[width=0.19\textwidth]{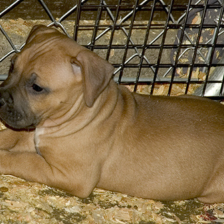}
    \includegraphics[width=0.19\textwidth]{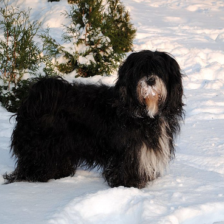}
    \includegraphics[width=0.19\textwidth]{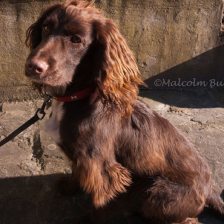}
    \includegraphics[width=0.19\textwidth]{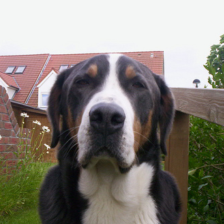}\\
    \includegraphics[width=0.19\textwidth]{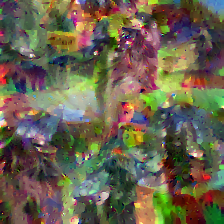}
    \includegraphics[width=0.19\textwidth]{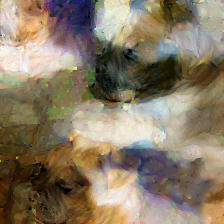}
    \includegraphics[width=0.19\textwidth]{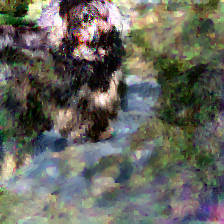}
    \includegraphics[width=0.19\textwidth]{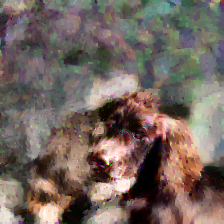}
    \includegraphics[width=0.19\textwidth]{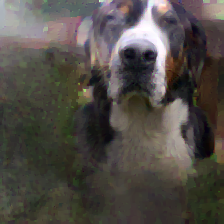}
    \caption{Additional single-image reconstruction from the parameter gradients of trained ResNet-152. Top row: Ground Truth. Bottom row: Reconstruction. The paper showed images 0000, 1000, 2000, 3000, 4000, 5000, 6000, 7000 from the ILSVRC2012 validation set. These are images 8000-12000.}
    \label{fig:imagenetstuff3}
\end{figure}

\begin{figure}
\centering
    \includegraphics[width=0.19\textwidth]{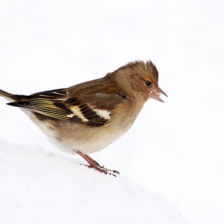}
    \includegraphics[width=0.19\textwidth]{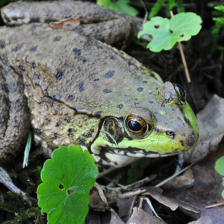}
    \includegraphics[width=0.19\textwidth]{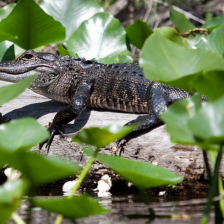}
    \includegraphics[width=0.19\textwidth]{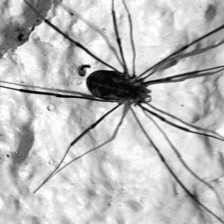}
    \includegraphics[width=0.19\textwidth]{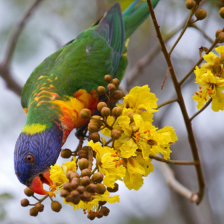}\\
    \includegraphics[width=0.19\textwidth]{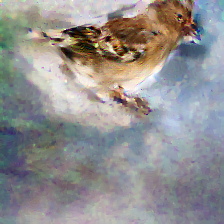}
    \includegraphics[width=0.19\textwidth]{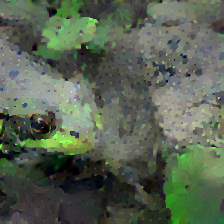}
    \includegraphics[width=0.19\textwidth]{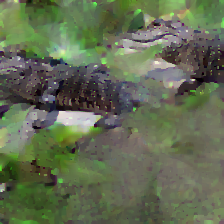}
    \includegraphics[width=0.19\textwidth]{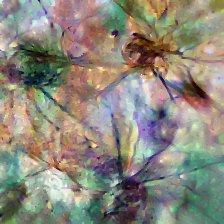}
    \includegraphics[width=0.19\textwidth]{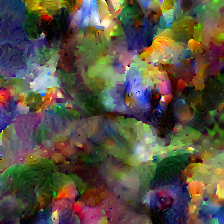}
    \caption{Additional single-image reconstruction from the parameter gradients of trained ResNet-152. Top row: Ground Truth. Bottom row: Reconstruction. These are images 500, 1500, 2500, 3500, 4500.}
    \label{fig:imagenetstuff2}
\end{figure}

\subsection{Multi-Image Recovery of Sec. 6}
For multi-image recovery, we show the full set of 100 images in Fig. \ref{fig:full_cifar100_2}, we recommend to zoom in to a digital version of the figure. The success rate for separate images is semi-random, depending on the initialization. %To show this, we repeat the same experiment in Fig. \ref{fig:full_cifar100_3}. Here, trout (the gray fish) and cockroach are again easily recognizable, yet the ladybug and the squirrel are less recognizable. However other examples, such as sweet pepper, chimpanzee, rocket and lobster are now more recognizable.

\subsection{General case of Sec. 6}
We show the results for the first ten experiments in Figures \ref{fig:fedavg4}, \ref{fig:fedavg1}, \ref{fig:fedavg2}, \ref{fig:fedavg3}, \ref{fig:fedavg5}. In Figure \ref{fig:fedavg4} we even show all $100$ experiments as there only one image is used per experiment.

\begin{figure}
    \centering
\begin{tabular}{l P{\widw cm}P{\widw cm}P{\widw cm}P{\widw cm}P{\widw cm}P{\widw cm}P{\widw cm}P{\widw cm}P{\widw cm}}
	\includegraphics[width=0.1\textwidth]{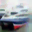}
	\includegraphics[width=0.1\textwidth]{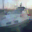}
	\includegraphics[width=0.1\textwidth]{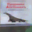}
	\includegraphics[width=0.1\textwidth]{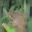}
	\includegraphics[width=0.1\textwidth]{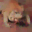}
	\includegraphics[width=0.1\textwidth]{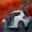}
	\includegraphics[width=0.1\textwidth]{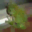}
	\includegraphics[width=0.1\textwidth]{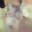}
	\includegraphics[width=0.1\textwidth]{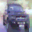}
	\includegraphics[width=0.1\textwidth]{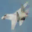}\\
	\includegraphics[width=0.1\textwidth]{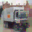}
	\includegraphics[width=0.1\textwidth]{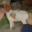}
	\includegraphics[width=0.1\textwidth]{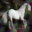}
	\includegraphics[width=0.1\textwidth]{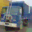}
	\includegraphics[width=0.1\textwidth]{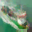}
	\includegraphics[width=0.1\textwidth]{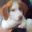}
	\includegraphics[width=0.1\textwidth]{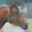}
	\includegraphics[width=0.1\textwidth]{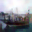}
	\includegraphics[width=0.1\textwidth]{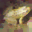}
	\includegraphics[width=0.1\textwidth]{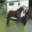}\\
	\includegraphics[width=0.1\textwidth]{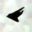}
	\includegraphics[width=0.1\textwidth]{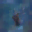}
	\includegraphics[width=0.1\textwidth]{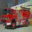}
	\includegraphics[width=0.1\textwidth]{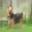}
	\includegraphics[width=0.1\textwidth]{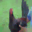}
	\includegraphics[width=0.1\textwidth]{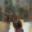}
	\includegraphics[width=0.1\textwidth]{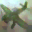}
	\includegraphics[width=0.1\textwidth]{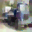}
	\includegraphics[width=0.1\textwidth]{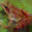}
	\includegraphics[width=0.1\textwidth]{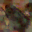}\\
	\includegraphics[width=0.1\textwidth]{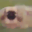}
	\includegraphics[width=0.1\textwidth]{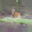}
	\includegraphics[width=0.1\textwidth]{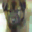}
	\includegraphics[width=0.1\textwidth]{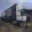}
	\includegraphics[width=0.1\textwidth]{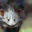}
	\includegraphics[width=0.1\textwidth]{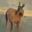}
	\includegraphics[width=0.1\textwidth]{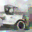}
	\includegraphics[width=0.1\textwidth]{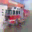}
	\includegraphics[width=0.1\textwidth]{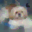}
	\includegraphics[width=0.1\textwidth]{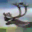}\\
	\includegraphics[width=0.1\textwidth]{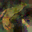}
	\includegraphics[width=0.1\textwidth]{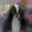}
	\includegraphics[width=0.1\textwidth]{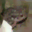}
	\includegraphics[width=0.1\textwidth]{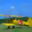}
	\includegraphics[width=0.1\textwidth]{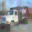}
	\includegraphics[width=0.1\textwidth]{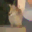}
	\includegraphics[width=0.1\textwidth]{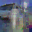}
	\includegraphics[width=0.1\textwidth]{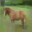}
	\includegraphics[width=0.1\textwidth]{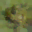}
	\includegraphics[width=0.1\textwidth]{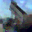}\\
	\includegraphics[width=0.1\textwidth]{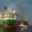}
	\includegraphics[width=0.1\textwidth]{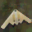}
	\includegraphics[width=0.1\textwidth]{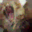}
	\includegraphics[width=0.1\textwidth]{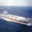}
	\includegraphics[width=0.1\textwidth]{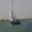}
	\includegraphics[width=0.1\textwidth]{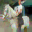}
	\includegraphics[width=0.1\textwidth]{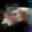}
	\includegraphics[width=0.1\textwidth]{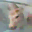}
	\includegraphics[width=0.1\textwidth]{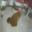}
	\includegraphics[width=0.1\textwidth]{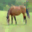}\\
	\includegraphics[width=0.1\textwidth]{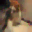}
	\includegraphics[width=0.1\textwidth]{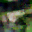}
	\includegraphics[width=0.1\textwidth]{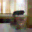}
	\includegraphics[width=0.1\textwidth]{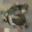}
	\includegraphics[width=0.1\textwidth]{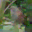}
	\includegraphics[width=0.1\textwidth]{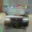}
	\includegraphics[width=0.1\textwidth]{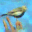}
	\includegraphics[width=0.1\textwidth]{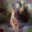}
	\includegraphics[width=0.1\textwidth]{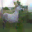}
	\includegraphics[width=0.1\textwidth]{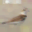}\\
	\includegraphics[width=0.1\textwidth]{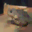}
	\includegraphics[width=0.1\textwidth]{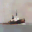}
	\includegraphics[width=0.1\textwidth]{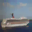}
	\includegraphics[width=0.1\textwidth]{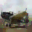}
	\includegraphics[width=0.1\textwidth]{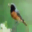}
	\includegraphics[width=0.1\textwidth]{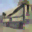}
	\includegraphics[width=0.1\textwidth]{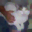}
	\includegraphics[width=0.1\textwidth]{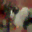}
	\includegraphics[width=0.1\textwidth]{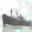}
	\includegraphics[width=0.1\textwidth]{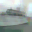}\\
	\includegraphics[width=0.1\textwidth]{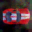}
	\includegraphics[width=0.1\textwidth]{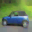}
	\includegraphics[width=0.1\textwidth]{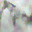}
	\includegraphics[width=0.1\textwidth]{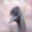}
	\includegraphics[width=0.1\textwidth]{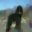}
	\includegraphics[width=0.1\textwidth]{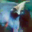}
	\includegraphics[width=0.1\textwidth]{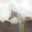}
	\includegraphics[width=0.1\textwidth]{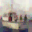}
	\includegraphics[width=0.1\textwidth]{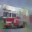}
	\includegraphics[width=0.1\textwidth]{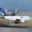}\\
	\includegraphics[width=0.1\textwidth]{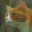}
	\includegraphics[width=0.1\textwidth]{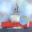}
	\includegraphics[width=0.1\textwidth]{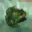}
	\includegraphics[width=0.1\textwidth]{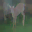}
	\includegraphics[width=0.1\textwidth]{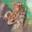}
	\includegraphics[width=0.1\textwidth]{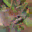}
	\includegraphics[width=0.1\textwidth]{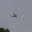}
	\includegraphics[width=0.1\textwidth]{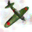}
	\includegraphics[width=0.1\textwidth]{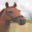}
	\includegraphics[width=0.1\textwidth]{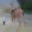}
\end{tabular}
    \caption{Results of the first $100$ experiments for $E=5$, $n=1$, $B=1$.}
    \label{fig:fedavg4}
\end{figure}

\begin{figure}
    \centering
    \begin{tabular}{l P{\widw cm}P{\widw cm}P{\widw cm}P{\widw cm}P{\widw cm}P{\widw cm}P{\widw cm}}
\includegraphics[width=0.11\textwidth]{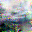}
\includegraphics[width=0.11\textwidth]{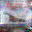}
\includegraphics[width=0.11\textwidth]{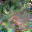}
\includegraphics[width=0.11\textwidth]{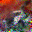}
\includegraphics[width=0.11\textwidth]{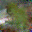}
\includegraphics[width=0.11\textwidth]{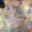}
\includegraphics[width=0.11\textwidth]{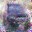}
\includegraphics[width=0.11\textwidth]{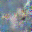}\\
\includegraphics[width=0.11\textwidth]{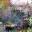}
\includegraphics[width=0.11\textwidth]{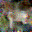}
\includegraphics[width=0.11\textwidth]{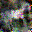}
\includegraphics[width=0.11\textwidth]{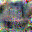}
\includegraphics[width=0.11\textwidth]{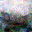}
\includegraphics[width=0.11\textwidth]{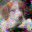}
\includegraphics[width=0.11\textwidth]{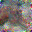}
\includegraphics[width=0.11\textwidth]{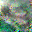}\\
\includegraphics[width=0.11\textwidth]{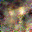}
\includegraphics[width=0.11\textwidth]{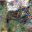}
\includegraphics[width=0.11\textwidth]{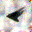}
\includegraphics[width=0.11\textwidth]{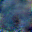}
\includegraphics[width=0.11\textwidth]{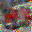}
\includegraphics[width=0.11\textwidth]{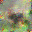}
\includegraphics[width=0.11\textwidth]{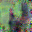}
\includegraphics[width=0.11\textwidth]{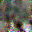}\\
\includegraphics[width=0.11\textwidth]{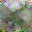}
\includegraphics[width=0.11\textwidth]{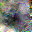}
\includegraphics[width=0.11\textwidth]{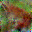}
\includegraphics[width=0.11\textwidth]{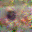}
\includegraphics[width=0.11\textwidth]{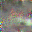}
\includegraphics[width=0.11\textwidth]{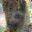}
\includegraphics[width=0.11\textwidth]{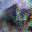}
\includegraphics[width=0.11\textwidth]{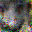}\\
\includegraphics[width=0.11\textwidth]{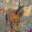}
\includegraphics[width=0.11\textwidth]{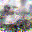}
\includegraphics[width=0.11\textwidth]{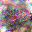}
\includegraphics[width=0.11\textwidth]{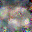}
\includegraphics[width=0.11\textwidth]{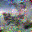}
\includegraphics[width=0.11\textwidth]{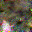}
\includegraphics[width=0.11\textwidth]{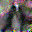}
\includegraphics[width=0.11\textwidth]{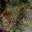}
\end{tabular}
    \caption{Results of the first ten experiments for $E=1$, $n=4$, $B=2$.}
    \label{fig:fedavg1}
\end{figure}

\begin{figure}
    \centering
\centering
\begin{tabular}{l P{\widw cm}P{\widw cm}P{\widw cm}P{\widw cm}P{\widw cm}P{\widw cm}P{\widw cm}}
\includegraphics[width=0.11\textwidth]{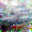}
\includegraphics[width=0.11\textwidth]{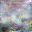}
\includegraphics[width=0.11\textwidth]{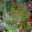}
\includegraphics[width=0.11\textwidth]{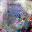}
\includegraphics[width=0.11\textwidth]{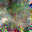}
\includegraphics[width=0.11\textwidth]{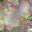}
\includegraphics[width=0.11\textwidth]{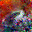}
\includegraphics[width=0.11\textwidth]{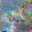}\\
\includegraphics[width=0.11\textwidth]{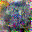}
\includegraphics[width=0.11\textwidth]{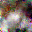}
\includegraphics[width=0.11\textwidth]{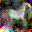}
\includegraphics[width=0.11\textwidth]{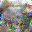}
\includegraphics[width=0.11\textwidth]{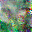}
\includegraphics[width=0.11\textwidth]{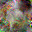}
\includegraphics[width=0.11\textwidth]{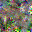}
\includegraphics[width=0.11\textwidth]{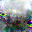}\\
\includegraphics[width=0.11\textwidth]{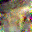}
\includegraphics[width=0.11\textwidth]{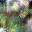}
\includegraphics[width=0.11\textwidth]{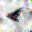}
\includegraphics[width=0.11\textwidth]{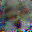}
\includegraphics[width=0.11\textwidth]{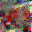}
\includegraphics[width=0.11\textwidth]{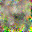}
\includegraphics[width=0.11\textwidth]{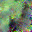}
\includegraphics[width=0.11\textwidth]{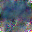}\\
\includegraphics[width=0.11\textwidth]{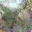}
\includegraphics[width=0.11\textwidth]{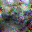}
\includegraphics[width=0.11\textwidth]{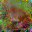}
\includegraphics[width=0.11\textwidth]{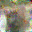}
\includegraphics[width=0.11\textwidth]{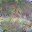}
\includegraphics[width=0.11\textwidth]{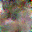}
\includegraphics[width=0.11\textwidth]{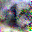}
\includegraphics[width=0.11\textwidth]{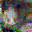}\\
\includegraphics[width=0.11\textwidth]{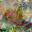}
\includegraphics[width=0.11\textwidth]{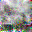}
\includegraphics[width=0.11\textwidth]{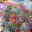}
\includegraphics[width=0.11\textwidth]{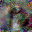}
\includegraphics[width=0.11\textwidth]{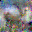}
\includegraphics[width=0.11\textwidth]{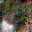}
\includegraphics[width=0.11\textwidth]{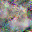}
\includegraphics[width=0.11\textwidth]{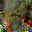}\\
\includegraphics[width=0.11\textwidth]{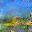}
\includegraphics[width=0.11\textwidth]{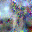}
\includegraphics[width=0.11\textwidth]{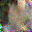}
\includegraphics[width=0.11\textwidth]{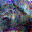}
\includegraphics[width=0.11\textwidth]{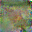}
\includegraphics[width=0.11\textwidth]{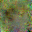}
\includegraphics[width=0.11\textwidth]{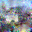}
\includegraphics[width=0.11\textwidth]{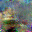}\\
\includegraphics[width=0.11\textwidth]{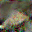}
\includegraphics[width=0.11\textwidth]{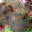}
\includegraphics[width=0.11\textwidth]{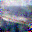}
\includegraphics[width=0.11\textwidth]{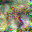}
\includegraphics[width=0.11\textwidth]{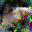}
\includegraphics[width=0.11\textwidth]{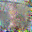}
\includegraphics[width=0.11\textwidth]{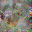}
\includegraphics[width=0.11\textwidth]{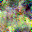}\\
\includegraphics[width=0.11\textwidth]{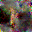}
\includegraphics[width=0.11\textwidth]{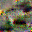}
\includegraphics[width=0.11\textwidth]{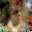}
\includegraphics[width=0.11\textwidth]{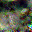}
\includegraphics[width=0.11\textwidth]{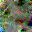}
\includegraphics[width=0.11\textwidth]{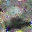}
\includegraphics[width=0.11\textwidth]{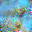}
\includegraphics[width=0.11\textwidth]{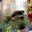}\\
\includegraphics[width=0.11\textwidth]{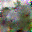}
\includegraphics[width=0.11\textwidth]{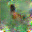}
\includegraphics[width=0.11\textwidth]{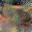}
\includegraphics[width=0.11\textwidth]{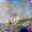}
\includegraphics[width=0.11\textwidth]{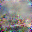}
\includegraphics[width=0.11\textwidth]{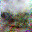}
\includegraphics[width=0.11\textwidth]{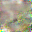}
\includegraphics[width=0.11\textwidth]{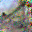}\\
\includegraphics[width=0.11\textwidth]{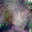}
\includegraphics[width=0.11\textwidth]{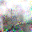}
\includegraphics[width=0.11\textwidth]{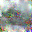}
\includegraphics[width=0.11\textwidth]{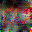}
\includegraphics[width=0.11\textwidth]{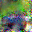}
\includegraphics[width=0.11\textwidth]{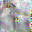}
\includegraphics[width=0.11\textwidth]{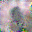}
\includegraphics[width=0.11\textwidth]{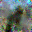}
\end{tabular}
    \caption{Results of the first ten experiments for $E=1$, $n=8$, $B=2$.}
    \label{fig:fedavg2}
\end{figure}

\begin{figure}
    \centering
    \begin{tabular}{l P{\widw cm}P{\widw cm}P{\widw cm}P{\widw cm}P{\widw cm}P{\widw cm}P{\widw cm}}
\includegraphics[width=0.11\textwidth]{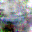}
\includegraphics[width=0.11\textwidth]{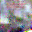}
\includegraphics[width=0.11\textwidth]{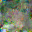}
\includegraphics[width=0.11\textwidth]{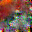}
\includegraphics[width=0.11\textwidth]{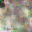}
\includegraphics[width=0.11\textwidth]{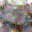}
\includegraphics[width=0.11\textwidth]{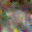}
\includegraphics[width=0.11\textwidth]{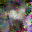}\\
\includegraphics[width=0.11\textwidth]{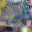}
\includegraphics[width=0.11\textwidth]{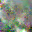}
\includegraphics[width=0.11\textwidth]{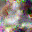}
\includegraphics[width=0.11\textwidth]{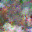}
\includegraphics[width=0.11\textwidth]{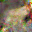}
\includegraphics[width=0.11\textwidth]{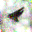}
\includegraphics[width=0.11\textwidth]{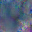}
\includegraphics[width=0.11\textwidth]{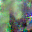}\\
\includegraphics[width=0.11\textwidth]{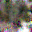}
\includegraphics[width=0.11\textwidth]{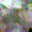}
\includegraphics[width=0.11\textwidth]{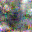}
\includegraphics[width=0.11\textwidth]{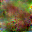}
\includegraphics[width=0.11\textwidth]{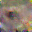}
\includegraphics[width=0.11\textwidth]{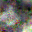}
\includegraphics[width=0.11\textwidth]{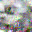}
\includegraphics[width=0.11\textwidth]{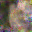}\\
\includegraphics[width=0.11\textwidth]{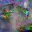}
\includegraphics[width=0.11\textwidth]{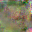}
\includegraphics[width=0.11\textwidth]{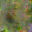}
\includegraphics[width=0.11\textwidth]{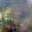}
\includegraphics[width=0.11\textwidth]{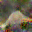}
\includegraphics[width=0.11\textwidth]{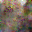}
\includegraphics[width=0.11\textwidth]{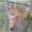}
\includegraphics[width=0.11\textwidth]{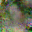}\\
\includegraphics[width=0.11\textwidth]{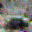}
\includegraphics[width=0.11\textwidth]{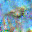}
\includegraphics[width=0.11\textwidth]{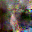}
\includegraphics[width=0.11\textwidth]{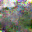}
\includegraphics[width=0.11\textwidth]{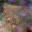}
\includegraphics[width=0.11\textwidth]{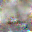}
\includegraphics[width=0.11\textwidth]{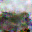}
\includegraphics[width=0.11\textwidth]{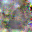}\\
\includegraphics[width=0.11\textwidth]{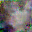}
\includegraphics[width=0.11\textwidth]{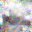}
\includegraphics[width=0.11\textwidth]{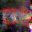}
\includegraphics[width=0.11\textwidth]{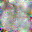}
\includegraphics[width=0.11\textwidth]{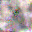}
\includegraphics[width=0.11\textwidth]{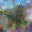}
\includegraphics[width=0.11\textwidth]{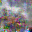}
\includegraphics[width=0.11\textwidth]{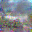}\\
\includegraphics[width=0.11\textwidth]{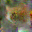}
\includegraphics[width=0.11\textwidth]{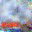}
\includegraphics[width=0.11\textwidth]{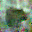}
\includegraphics[width=0.11\textwidth]{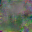}
\includegraphics[width=0.11\textwidth]{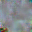}
\includegraphics[width=0.11\textwidth]{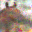}
\includegraphics[width=0.11\textwidth]{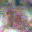}
\includegraphics[width=0.11\textwidth]{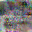}\\
\includegraphics[width=0.11\textwidth]{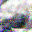}
\includegraphics[width=0.11\textwidth]{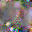}
\includegraphics[width=0.11\textwidth]{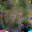}
\includegraphics[width=0.11\textwidth]{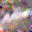}
\includegraphics[width=0.11\textwidth]{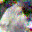}
\includegraphics[width=0.11\textwidth]{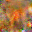}
\includegraphics[width=0.11\textwidth]{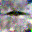}
\includegraphics[width=0.11\textwidth]{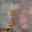}\\
\includegraphics[width=0.11\textwidth]{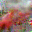}
\includegraphics[width=0.11\textwidth]{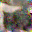}
\includegraphics[width=0.11\textwidth]{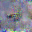}
\includegraphics[width=0.11\textwidth]{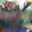}
\includegraphics[width=0.11\textwidth]{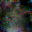}
\includegraphics[width=0.11\textwidth]{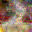}
\includegraphics[width=0.11\textwidth]{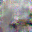}
\includegraphics[width=0.11\textwidth]{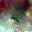}\\
\includegraphics[width=0.11\textwidth]{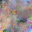}
\includegraphics[width=0.11\textwidth]{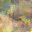}
\includegraphics[width=0.11\textwidth]{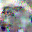}
\includegraphics[width=0.11\textwidth]{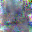}
\includegraphics[width=0.11\textwidth]{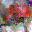}
\includegraphics[width=0.11\textwidth]{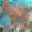}
\includegraphics[width=0.11\textwidth]{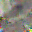}
\includegraphics[width=0.11\textwidth]{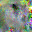}
\end{tabular}
    \caption{Results of the first ten experiments for $E=1$, $n=8$, $B=8$.}
    \label{fig:fedavg3}
\end{figure}

\begin{figure}
    \centering
    \begin{tabular}{l P{\widw cm}P{\widw cm}P{\widw cm}P{\widw cm}P{\widw cm}P{\widw cm}P{\widw cm}}
\includegraphics[width=0.11\textwidth]{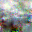}
\includegraphics[width=0.11\textwidth]{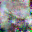}
\includegraphics[width=0.11\textwidth]{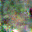}
\includegraphics[width=0.11\textwidth]{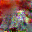}
\includegraphics[width=0.11\textwidth]{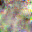}
\includegraphics[width=0.11\textwidth]{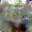}
\includegraphics[width=0.11\textwidth]{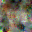}
\includegraphics[width=0.11\textwidth]{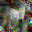}\\
\includegraphics[width=0.11\textwidth]{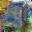}
\includegraphics[width=0.11\textwidth]{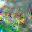}
\includegraphics[width=0.11\textwidth]{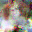}
\includegraphics[width=0.11\textwidth]{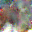}
\includegraphics[width=0.11\textwidth]{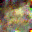}
\includegraphics[width=0.11\textwidth]{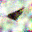}
\includegraphics[width=0.11\textwidth]{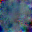}
\includegraphics[width=0.11\textwidth]{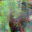}\\
\includegraphics[width=0.11\textwidth]{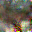}
\includegraphics[width=0.11\textwidth]{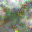}
\includegraphics[width=0.11\textwidth]{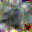}
\includegraphics[width=0.11\textwidth]{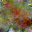}
\includegraphics[width=0.11\textwidth]{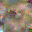}
\includegraphics[width=0.11\textwidth]{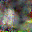}
\includegraphics[width=0.11\textwidth]{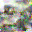}
\includegraphics[width=0.11\textwidth]{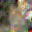}\\
\includegraphics[width=0.11\textwidth]{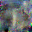}
\includegraphics[width=0.11\textwidth]{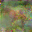}
\includegraphics[width=0.11\textwidth]{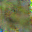}
\includegraphics[width=0.11\textwidth]{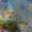}
\includegraphics[width=0.11\textwidth]{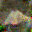}
\includegraphics[width=0.11\textwidth]{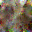}
\includegraphics[width=0.11\textwidth]{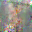}
\includegraphics[width=0.11\textwidth]{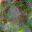}\\
\includegraphics[width=0.11\textwidth]{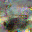}
\includegraphics[width=0.11\textwidth]{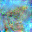}
\includegraphics[width=0.11\textwidth]{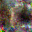}
\includegraphics[width=0.11\textwidth]{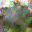}
\includegraphics[width=0.11\textwidth]{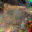}
\includegraphics[width=0.11\textwidth]{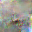}
\includegraphics[width=0.11\textwidth]{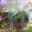}
\includegraphics[width=0.11\textwidth]{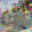}\\
\includegraphics[width=0.11\textwidth]{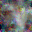}
\includegraphics[width=0.11\textwidth]{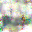}
\includegraphics[width=0.11\textwidth]{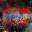}
\includegraphics[width=0.11\textwidth]{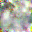}
\includegraphics[width=0.11\textwidth]{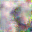}
\includegraphics[width=0.11\textwidth]{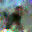}
\includegraphics[width=0.11\textwidth]{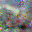}
\includegraphics[width=0.11\textwidth]{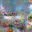}\\
\includegraphics[width=0.11\textwidth]{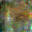}
\includegraphics[width=0.11\textwidth]{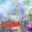}
\includegraphics[width=0.11\textwidth]{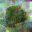}
\includegraphics[width=0.11\textwidth]{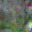}
\includegraphics[width=0.11\textwidth]{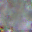}
\includegraphics[width=0.11\textwidth]{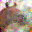}
\includegraphics[width=0.11\textwidth]{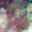}
\includegraphics[width=0.11\textwidth]{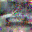}\\
\includegraphics[width=0.11\textwidth]{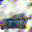}
\includegraphics[width=0.11\textwidth]{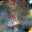}
\includegraphics[width=0.11\textwidth]{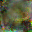}
\includegraphics[width=0.11\textwidth]{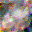}
\includegraphics[width=0.11\textwidth]{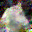}
\includegraphics[width=0.11\textwidth]{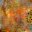}
\includegraphics[width=0.11\textwidth]{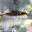}
\includegraphics[width=0.11\textwidth]{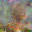}\\
\includegraphics[width=0.11\textwidth]{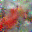}
\includegraphics[width=0.11\textwidth]{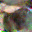}
\includegraphics[width=0.11\textwidth]{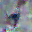}
\includegraphics[width=0.11\textwidth]{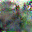}
\includegraphics[width=0.11\textwidth]{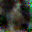}
\includegraphics[width=0.11\textwidth]{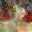}
\includegraphics[width=0.11\textwidth]{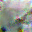}
\includegraphics[width=0.11\textwidth]{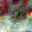}\\
\includegraphics[width=0.11\textwidth]{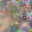}
\includegraphics[width=0.11\textwidth]{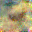}
\includegraphics[width=0.11\textwidth]{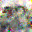}
\includegraphics[width=0.11\textwidth]{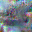}
\includegraphics[width=0.11\textwidth]{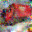}
\includegraphics[width=0.11\textwidth]{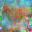}
\includegraphics[width=0.11\textwidth]{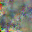}
\includegraphics[width=0.11\textwidth]{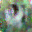}
\end{tabular}
    \caption{Results of the first ten experiments for $E=5$, $n=8$, $B=8$.}
    \label{fig:fedavg5}
\end{figure}

\phantom{asdasdddddddddddddddddddddddddddddddddddddddddddddddddddddddddddddddddddddddddddddddddddddddddddddddddddddddddddddddddddddddddddddddddddddddddddddddddddddddddddddddddddddddddddddddddddddddddddddddddddddddddddddddddddddddddddddddddddddddddddddddddddddddddddddddddddddddddddddddddddddddddddddddddddddddddddddddddddddddddddddddddddddddddddddddddddddddddddddddddddddddddddddddddddddddddddddddddddddddddddddddddddddddddd} % :>

\noindent Additional images are following on the next pages.

\begin{figure}
    \centering
    \includegraphics[height=0.44\textheight]{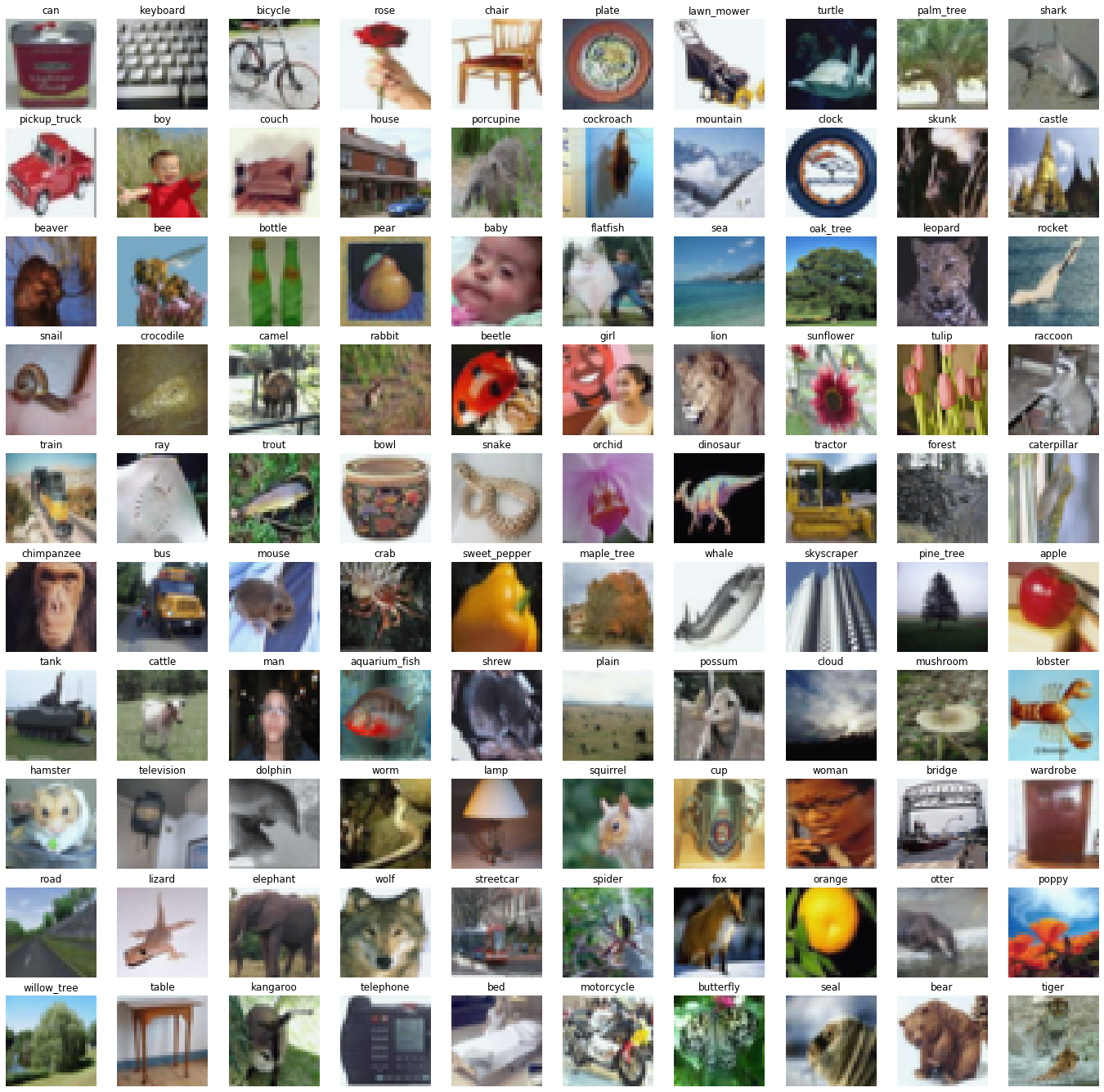}
    \includegraphics[height=0.44\textheight]{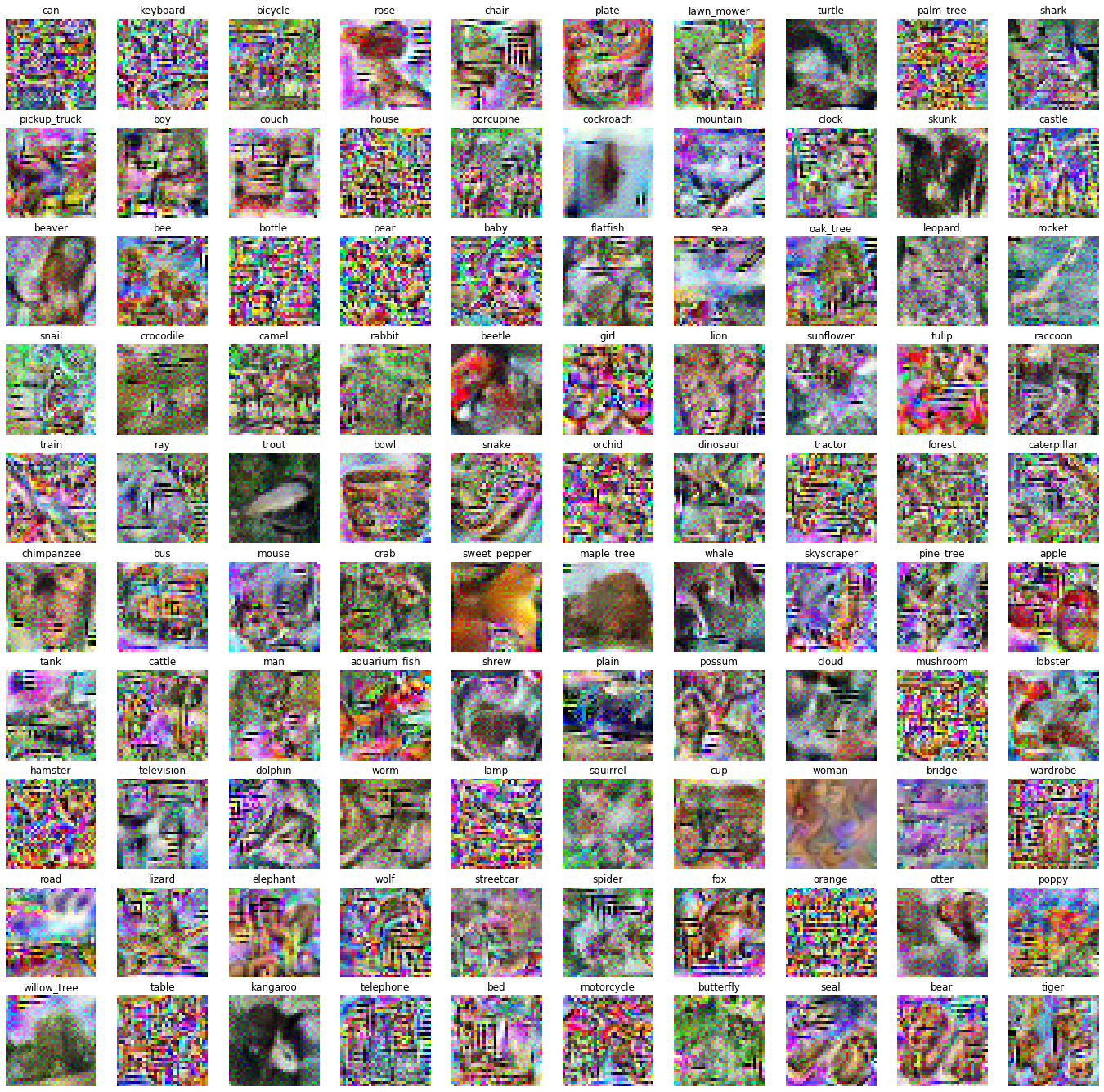}
    \caption{Full results for the batch of CIFAR-100 images. Same experiment as in Fig. 6 of the paper.}
    \label{fig:full_cifar100_2}
\end{figure}
\end{document}